\def\@seccntformat#1{\@ifundefined{#1@cntformat}%
   {\csname the#1\endcsname\quad}  
   {\csname #1@cntformat\endcsname}
}
\let\oldappendix\appendix 
\renewcommand\appendix{%
    \oldappendix
    \newcommand{\section@cntformat}{\appendixname~\thesection\quad}
}
\def\norm#1{\|#1\|}
\newcommand{\R}{{\rm I\!R}}
\newcommand{\argmin}[1]{\mathop{\hbox{argmin}}_{#1}~\!}
\newcommand{\sign}[1]{\hbox{sign}(#1)}
\newcommand{\half}{\frac 1 2}
\newcommand{\be}{\begin{equation}}
\newcommand{\ee}{\end{equation}}
\newcommand{\bea}{\begin{eqnarray}}
\newcommand{\eea}{\end{eqnarray}}
\newcommand{\ben}{\begin{equation*}}
\newcommand{\een}{\end{equation*}}
\newcommand{\bean}{\begin{eqnarray*}}
\newcommand{\eean}{\end{eqnarray*}}
\definecolor{red}{rgb}{1,0,0}
\begin{document}

\mainmatter  

\title{Linear Convergence of Gradient and Proximal-Gradient Methods Under the Polyak-\L{}ojasiewicz Condition}

\titlerunning{Linear Convergence Under the Polyak-\L{}ojasiewicz Condition}

%
%
\author{Hamed Karimi\and Julie Nutini \and Mark Schmidt%
}
%

\institute{Department of Computer Science, University of British Columbia\\
Vancouver, British Columbia, Canada\\
\mailsa\\
}

%
%

\maketitle

\begin{abstract} 
In 1963, Polyak proposed a simple condition that is sufficient to show a global linear convergence rate for gradient descent. This condition is a special case of the \L{}ojasiewicz inequality proposed in the same year, and it does not require strong convexity (or even convexity). In this work, we show that this much-older Polyak-\L{}ojasiewicz (PL) inequality is actually weaker than the main conditions that have been explored to show linear convergence rates without strong convexity over the last 25 years. We also use the PL inequality to give new analyses of randomized and greedy coordinate descent methods, sign-based gradient descent methods, and stochastic gradient methods in the classic setting (with decreasing or constant step-sizes) as well as the variance-reduced setting. 
We further propose a generalization that applies to proximal-gradient methods for non-smooth optimization, leading to simple proofs of linear convergence of these methods.
Along the way, we give simple convergence results for a wide variety of problems in machine learning: least squares, logistic regression, boosting, resilient backpropagation, L1-regularization, support vector machines, stochastic dual coordinate ascent, and stochastic variance-reduced gradient methods.
\end{abstract}

\section{Introduction}\label{sect:intro}

Fitting most machine learning models involves solving some sort of optimization problem. Gradient descent, and variants of it like coordinate descent and stochastic gradient, are the workhorse tools used by the field to solve very large instances of these problems. In this work we consider the basic problem of minimizing a smooth function and the convergence rate of gradient descent methods. It is well-known that if $f$ is strongly-convex, then gradient descent achieves a global linear convergence rate for this problem~\citep{Nes04b}. However, many of the fundamental models in machine learning  like least squares and logistic regression yield objective functions that are convex but not strongly-convex. Further, if $f$ is only convex, then gradient descent only achieves a sub-linear rate.

This situation has motivated a variety of alternatives to strong convexity (SC) in the literature, in order to show that we can obtain linear convergence rates for problems like least squares and logistic regression. One of the oldest of these conditions is the \emph{error bounds} (EB) of~\citet{Luo}, 
but four other recently-considered conditions are \emph{essential strong convexity} (ESC)~\citep{Liu_esc}, \emph{weak strong convexity} (WSC)~\citep{Necoara2015}, the \emph{restricted secant inequality} (RSI)~\citep{zhang-yin}, and the \emph{quadratic growth} (QG) condition~\citep{anitescu2000}. Some of these conditions have different names in the special case of convex functions. For example, a convex function satisfying RSI is said to satisfy \emph{restricted strong convexity} (RSC)~\citep{zhang-yin}. Names describing convex functions satisfying QG include \emph{optimal strong convexity} (OSC)~\citep{Liu_osc}, \emph{semi-strong convexity} (SSC)~\citep{gong2014linear}, and (confusingly) WSC~\citep{Ma2015}. The proofs of linear convergence under all of these relaxations are typically not straightforward, and it is rarely discussed how these conditions relate to each other.

In this work, we consider a much older condition that we refer to as the Polyak-\L{}ojasiewicz (PL) inequality. This inequality was originally introduced by~\citet{polyak}, who showed that it is a sufficient condition for gradient descent to achieve a linear convergence rate. We describe it as the PL inequality because it is also a special case of the inequality introduced in the same year by~\citet{loj}. We review the PL inequality in the next section and how it leads to a trivial proof of the linear convergence rate of gradient descent. Next, in terms of showing a global linear convergence rate to the optimal solution, we show that the PL inequality is \emph{weaker} than all of the more recent conditions discussed in the previous paragraph. This suggests that we can replace the long and complicated proofs under any of the conditions above with simpler proofs based on the PL inequality. Subsequently, we show how this result implies gradient descent achieves linear rates for standard problems in machine learning like least squares and logistic regression that are not necessarily SC, and even for some non-convex problems (Section~\ref{sec:problems}). In Section~\ref{sec:huge-scale} we use the PL inequality to give new convergence rates for randomized and greedy coordinate descent (implying a new convergence rate for certain variants of boosting), sign-based gradient descent methods, and stochastic gradient methods in either the classical or variance-reduced setting. Next we turn to the problem of minimizing the sum of a smooth function and a simple non-smooth function. We propose a generalization of the PL inequality that allows us to show linear convergence rates for proximal-gradient methods without SC. In this setting, the new condition is equivalent to the well-known Kurdyka-\L{}ojasiewicz (KL) condition which has been used to show linear convergence of proximal-gradient methods for certain  problems like support vector machines and $\ell_1$-regularized least squares~\citep{Bolte2015}. But this new alternate generalization of the PL inequality leads to shorter and simpler proofs in these cases.


\section{Polyak-\L{}ojasiewicz Inequality} \label{sec:PL}

We first focus on the basic unconstrained optimization problem
\be
\label{eq:unc}
\argmin{x \in \mathbb{R}^d} f(x),
\ee
and we assume that the first derivative of $f$ is $L$-Lipschitz continuous. This means that
\be 
\label{lip_cond}
f(y) \leq f(x) + \langle \nabla f(x) , y -x\rangle + \frac{L }{ 2} || y - x ||^2,
\ee
for all $x$ and $y$. For twice-differentiable objectives this assumption means that the eigenvalues of $\nabla^2 f(x)$ are bounded above by some $L$, which is typically a reasonable assumption. We also assume  that the optimization problem has a non-empty solution set $\mathcal{X}^*$, and we use $f^*$ to denote the corresponding optimal function value. We will say that a function satisfies the PL inequality if the following holds for some $\mu > 0$,
\be
\label{pl_ineq}
\frac{1}{2}|| \nabla f(x) ||^2 \geq \mu (f(x) - f^*), \quad \forall~x.
\ee
This inequality simply requires that the gradient grows faster than a quadratic function as we move away from the optimal function value. Note that this inequality implies that every stationary point is a global minimum. But unlike SC, it does not imply that there is a unique solution.
Linear convergence of gradient descent under these assumptions was first proved by~\citet{polyak}. Below we give a simple proof of this result when using a step-size of $1/L$.

\begin{theorem}\label{th:lin_pl}
Consider problem~\eqref{eq:unc}, where $f$ has an $L$-Lipschitz continuous gradient (\ref{lip_cond}), a non-empty solution set $\mathcal{X}^*$, and satisfies the PL inequality (\ref{pl_ineq}). Then the gradient method  with a step-size of $1/L$, 
\be
\label{gd_update}
x_{k+1}  = x_k - \frac{1 }{ L} \nabla f(x_k),
\ee
has a global linear convergence rate,
\begin{equation*}
f(x_k) - f^* \leq \left( 1- \frac{\mu }{ L}\right)^k (f(x_0) - f^*).
\end{equation*}
\end{theorem}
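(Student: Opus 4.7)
The plan is to combine two ingredients: the standard descent lemma that follows from $L$-Lipschitz continuity of $\nabla f$, and the PL inequality itself. Both are one-line applications, so the whole argument should fit in a few lines and then iterate.

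First I would apply the Lipschitz condition \eqref{lip_cond} with $y = x_{k+1}$ and $x = x_k$, using the explicit update rule \eqref{gd_update}. The inner-product term becomes $-\tfrac{1}{L}\|\nabla f(x_k)\|^2$ and the quadratic term becomes $\tfrac{1}{2L}\|\nabla f(x_k)\|^2$, so the two combine to give the familiar per-iteration decrease
\begin{equation*}
f(x_{k+1}) \leq f(x_k) - \frac{1}{2L}\|\nabla f(x_k)\|^2.
\end{equation*}
This is the only place the step-size $1/L$ is used, and it is chosen precisely to maximize the coefficient in front of $\|\nabla f(x_k)\|^2$.

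Next I would subtract $f^*$ from both sides and substitute the PL inequality \eqref{pl_ineq} to lower-bound $\tfrac{1}{2}\|\nabla f(x_k)\|^2$ by $\mu(f(x_k)-f^*)$. This converts the gradient-norm decrease into a contraction on the suboptimality gap:
\begin{equation*}
f(x_{k+1}) - f^* \leq \left(1 - \frac{\mu}{L}\right)\bigl(f(x_k) - f^*\bigr).
\end{equation*}
Iterating this inequality from $k$ down to $0$ yields the claimed bound $f(x_k)-f^* \leq (1-\mu/L)^k(f(x_0)-f^*)$.

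There is really no hard step here; the only small thing worth noting is that the contraction factor $1 - \mu/L$ is guaranteed to lie in $[0,1)$ because the combination of $L$-smoothness and PL forces $\mu \leq L$ (this can be seen by applying \eqref{lip_cond} at $y = x - \tfrac{1}{L}\nabla f(x)$, which gives $\tfrac{1}{2L}\|\nabla f(x)\|^2 \leq f(x)-f^*$, and comparing with \eqref{pl_ineq}). Beyond that, the proof is purely a two-line chain: descent lemma, then PL, then iterate.
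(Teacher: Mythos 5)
Your proposal is correct and follows essentially the same route as the paper's proof: plug the update into the descent inequality from \eqref{lip_cond}, apply the PL inequality \eqref{pl_ineq}, and iterate the resulting contraction. Your extra remark that $\mu \le L$ (so the factor $1-\mu/L$ lies in $[0,1)$) is a valid observation the paper leaves implicit, but nothing more is needed.
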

\begin{proof}
By using update rule~\eqref{gd_update} in the Lipschitz inequality condition~\eqref{lip_cond} we have
\ben 
f(x_{k+1}) -f(x_k) \leq  - \frac{1 }{ 2L} ||\nabla f(x_k)||^2.
\een
Now by using the PL inequality (\ref{pl_ineq}) we get
\bean
f(x_{k+1}) -f(x_k) 
& \leq & - \frac{\mu }{ L} (f(x_k) - f^*).
\eean
Re-arranging and subtracting $f^*$ from both sides gives us $f(x_{k+1}) - f^* \leq \left( 1- \frac{\mu}{ L}\right) (f(x_k) - f^*)$. Applying this inequality recursively gives the result. \qed
\end{proof}
Note that the above result also holds if we use the optimal step-size at each iteration, because under this choice we have
\[
f(x_{k+1}) = \min_{\alpha}\{f(x_k - \alpha\nabla f(x_k))\} \leq f\left(x_k - \frac{1}{L}\nabla f(x_k)\right). 
\]
A beautiful aspect of this proof is its simplicity; in fact it is \emph{simpler} than the proof of the same fact under the usual SC assumption. It is certainly simpler than typical proofs which rely on the other conditions mentioned in Section~\ref{sect:intro}. Further, it is worth noting that the proof does \emph{not} assume convexity of $f$. Thus, this is one of the few general results we have for global linear convergence on non-convex problems. 

\subsection{Relationships Between Conditions}
\label{sec:weak}

As mentioned in the Section~\ref{sect:intro}, several other assumptions have been explored over the last 25 years in order to show that gradient descent achieves a linear convergence rate. These typically assume that $f$ is convex, and lead to more complicated proofs than the one above. However, it is rarely discussed how the conditions relate to each other. Indeed, all of the relationships that have been explored have only been in the context of convex functions~\citep{Bolte2015,Liu_osc,Necoara2015,Zhang2015}. In Appendix~\ref{app:weak}, we give the precise definitions of all  conditions and also prove the result below giving relationships between the conditions.
\begin{theorem}\label{thm:2}
For a function $f$ with a Lipschitz-continuous gradient, the following implications hold:
\[
(SC) \rightarrow (ESC) \rightarrow (WSC) \rightarrow (RSI) \rightarrow (EB) \equiv (PL) \rightarrow (QG).
\]
If we further assume that $f$ is convex then we have 
\[
(RSI) \equiv (EB) \equiv (PL) \equiv (QG).
\]
\end{theorem}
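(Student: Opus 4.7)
The plan is to prove the arrows sequentially along the chain and then close a loop under convexity to upgrade the last four conditions to an equivalence. Throughout I will write $x_p$ for the projection of $x$ onto $\mathcal{X}^*$ and use the fact that any $x_p \in \mathcal{X}^*$ is a stationary point, so $\nabla f(x_p) = 0$.

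For the first three arrows $(SC) \to (ESC) \to (WSC) \to (RSI)$, I would just chase definitions: SC is ESC applied on the whole space; ESC restricted to the point--projection pair $(x, x_p)$ yields WSC; and WSC, which upper-bounds $f^*$ by $f(x) + \langle \nabla f(x), x_p - x\rangle + (\mu/2)\|x_p - x\|^2$, becomes RSI by dropping the $f(x) \ge f^*$ term to obtain $\langle \nabla f(x), x - x_p\rangle \ge (\mu/2)\|x - x_p\|^2$. The step $(RSI) \to (EB)$ is then a single Cauchy--Schwarz: $\mu\|x - x_p\|^2 \le \langle \nabla f(x), x - x_p\rangle \le \|\nabla f(x)\|\,\|x - x_p\|$, after which dividing by $\|x - x_p\|$ gives the error bound.

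The core of the argument is the equivalence $(EB) \equiv (PL)$. One direction is short: since $\nabla f(x_p) = 0$, the Lipschitz descent inequality~\eqref{lip_cond} applied from $x_p$ to $x$ yields $f(x) - f^* \le (L/2)\|x - x_p\|^2$, and combining with EB squared gives PL with constant $\mu^2/L$. For the harder direction $(PL) \Rightarrow (EB)$, I plan a gradient-flow argument along $\dot{x}(t) = -\nabla f(x(t))$: a direct computation shows $\tfrac{d}{dt}\sqrt{f(x(t)) - f^*} = -\|\nabla f(x(t))\|^2 / (2\sqrt{f(x(t)) - f^*})$, and PL bounds $\sqrt{f(x(t)) - f^*} \le \|\nabla f(x(t))\|/\sqrt{2\mu}$, so $\|\nabla f(x(t))\| \le -\sqrt{2/\mu}\,\tfrac{d}{dt}\sqrt{f(x(t)) - f^*}$. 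Integrating bounds the total arclength (and hence $\|x_0 - x_p\|$) by $\sqrt{2(f(x_0) - f^*)/\mu}$, and a second use of PL on this right-hand side produces $\|\nabla f(x_0)\| \ge \mu\,\|x_0 - x_p\|$. That same arclength estimate simultaneously delivers $(PL) \to (QG)$.

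To close the loop under convexity, I would prove $(QG) \Rightarrow (PL)$: convexity gives $f(x) - f^* \le \langle \nabla f(x), x - x_p\rangle \le \|\nabla f(x)\|\,\|x - x_p\|$, QG supplies $\|x - x_p\| \le \sqrt{2(f(x) - f^*)/\mu}$, and substituting, squaring, and cancelling one power of $f(x) - f^*$ produces PL with constant $\mu/4$. Together with the already-proved arrows this makes $(RSI)$, $(EB)$, $(PL)$, $(QG)$ all equivalent under convexity. The main obstacle is the $(PL) \Rightarrow (EB)$ implication; every other link is a one- or two-line manipulation, but this one genuinely needs to integrate the PL bound along a trajectory. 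If the continuous ODE feels unnatural here, a clean discrete substitute is to run gradient descent with step $1/L$, use $\|x_{k+1} - x_k\| = \|\nabla f(x_k)\|/L$ and PL to bound $\|\nabla f(x_k)\|$ by a geometric factor of $\sqrt{f(x_k) - f^*}$, and sum a geometric series to obtain the same $\|x_0 - x_\infty\| \le C\sqrt{(f(x_0) - f^*)/\mu}$ bound on the distance to the limit point.
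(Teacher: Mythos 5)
Most of your plan coincides with the paper's own proof: the first three arrows by definition chasing, $(RSI)\rightarrow(EB)$ by Cauchy--Schwarz, $(EB)\rightarrow(PL)$ by the Lipschitz upper bound taken from $x_p$ (your constant $\mu^2/L$ is the right one), and the hard implication handled exactly as in the paper --- first $(PL)\rightarrow(QG)$ by integrating the PL bound along a descent trajectory to bound the arclength, and hence $\norm{x_0-x_p}$, by $\sqrt{2(f(x_0)-f^*)/\mu}$, then $(PL)+(QG)\rightarrow(EB)$ by chaining the two inequalities. Your flow $\dot x=-\nabla f(x)$ versus the paper's flow along $-\nabla\sqrt{f-f^*}$ is only a time reparameterization of the same curve, and your discrete substitute (summing $\norm{x_{k+1}-x_k}=\norm{\nabla f(x_k)}/L$ against the geometric decay of $f(x_k)-f^*$) is a legitimate variant that yields the same implications with $L$-dependent constants. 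One point to spell out in either version: you must argue the trajectory (or iterate sequence) actually converges to a point of $\mathcal{X}^*$ --- finite total arclength gives a limit point, and PL forces any stationary/limit point with $f=f^*$ to lie in $\mathcal{X}^*$ --- so that the arclength really dominates $\norm{x_0-x_p}$; the paper does this via the finite-time argument for its flow.

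The genuine gap is in the convex case. Your loop closure proves $(QG)+\text{convexity}\rightarrow(PL)$, which together with the unconditional arrows makes $(EB)$, $(PL)$, $(QG)$ mutually equivalent, but nothing in your chain implies $(RSI)$: you only ever use $(RSI)\rightarrow(EB)$, so the claimed four-way equivalence $(RSI)\equiv(EB)\equiv(PL)\equiv(QG)$ is not established. You need an implication \emph{into} RSI, and this is exactly what the paper proves: under convexity, $f(x_p)\ge f(x)+\langle\nabla f(x),x_p-x\rangle$, so
\[
\langle\nabla f(x),x-x_p\rangle \;\ge\; f(x)-f^* \;\ge\; \frac{\mu}{2}\norm{x-x_p}^2
\]
by QG, which is RSI with constant $\mu/2$. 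Adding this one line (in place of, or alongside, your $(QG)\rightarrow(PL)$ step) closes the loop and completes the proof of the second statement.
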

Note the equivalence between EB and PL is a special case of a more general result by~\citet[][Theorem~5]{Bolte2015}, while~\citet{zhang2016characterization} independently also recently gave the relationships between RSI, EB, PL, and QG.\footnote{~\citet{drusvyatskiy2016error} is a recent work discussing the relationships among many of these conditions for non-smooth functions.}
This result shows that QG is the weakest assumption among those considered. However, QG allows non-global local minima so it is not enough to guarantee that gradient descent finds a global minimizer. This means that, among those considered above, \emph{PL and the equivalent EB are the most general conditions} that allow linear convergence to a global minimizer. 
Note that in the convex case QG is called OSC or SSC, but the result above shows that in the convex case it is also equivalent to EB and PL (as well as RSI which is known as RSC in this case).

\subsection{Invex and Non-Convex Functions}

While the PL inequality does not imply convexity of $f$, it does  imply the weaker condition of \emph{invexity}. A function is invex if it is differentiable and there exists a vector valued function $\eta$ such that for any $x$ and $y$ in $\R^n$, the following inequality holds
\[
f(y) \ge f(x) + \nabla f(x)^T\eta(x,y).
\]
We obtain convex functions as the special case where $\eta(x,y) = y - x$.

Invexity was first introduced by~\citet{hanson1981}, and has been used in the context of learning output kernels~\citep{dinuzzo2011}. \citet{craven1985} show that a smooth $f$ is invex if and only if every stationary point of $f$ is a global minimum. Since the PL inequality implies that all stationary points are global minimizers, functions satisfying the PL inequality must be invex. It is easy to see this by noting that at any stationary point $\bar{x}$ we have $\nabla f(\bar{x}) = 0$, so we have 
\[
	0 = \frac{1}{2} \| \nabla f(\bar{x}) \|^2 \ge \mu (f(x) - f^*) \ge 0,
\]
where the last inequality holds because $\mu > 0$ and $f(x) \geq f^*$ for all $x$. This implies that $f(\bar{x}) = f^*$ and thus any stationary point must be a global minimum.

Theorem~\ref{thm:2} shows that all of the previous conditions (except QG) imply invexity. The function $f(x) = x^2 + 3\sin^2(x)$ is an example of an invex but non-convex function satisfying the PL inequality (with $\mu = 1/32$). Thus, Theorem~\ref{th:lin_pl} implies gradient descent obtains a global linear convergence rate on this function. 


Unfortunately, many complicated models have non-optimal stationary points. For example, typical deep feed-forward neural networks have sub-optimal stationary points and are thus not invex. A classic way to analyze functions like this is to consider a \emph{global convergence phase} and a \emph{local convergence phase}. The global convergence phase is the time spent to get ``close" to a local minimum, and then once we are ``close" to a local minimum the local convergence phase characterizes the convergence rate of the method. Usually, the local convergence phase starts to apply once we are locally SC around the minimizer. But this means that the local convergence phase may be arbitrarily small: for example, for $f(x) = x^2 + 3\sin^2(x)$ the local convergence rate would not even apply over the interval $x \in [-1,1]$. If we instead defined the local convergence phase in terms of locally satisfying the PL inequality, then we see that it can be \emph{much} larger ($x \in \R$ for this example).

\subsection{Relevant Problems}
\label{sec:problems}

If $f$ is $\mu$-SC, then it also satisfies the PL inequality with the same $\mu$ (see Appendix~\ref{app:problems}). Further, by Theorem~\ref{thm:2}, $f$ satisfies the PL inequality if it satisfies any of ESC, WSC, RSI, or EB (while for convex $f$, QG is also sufficient). Although it is hard to precisely characterize the general class of functions for which the PL inequality is satisfied, we note one important special case below.

\textbf{Strongly-convex composed with linear}: This is the case where $f$ has the form $f(x) = g(Ax)$ for some $\sigma$-SC function $g$ and some matrix $A$.  In Appendix~\ref{app:problems}, we show that this class of functions satisfies the PL inequality, and we note that this form frequently arises in machine learning. For example, least squares problems have the form
\[
f(x) = \norm{Ax-b}^2,
\]
and by noting that $g(z) \triangleq \norm{z-b}^2$ is SC we see that least squares falls into this category. Indeed, this class includes all convex quadratic functions.
%

In the case of logistic regression we have
\[
f(x) = \sum_{i=1}^n \log(1 + \exp(b_ia_i^Tx)).
\]
This can be written in the form $g(Ax)$, where $g$ is strictly convex but not SC. In cases like this where $g$ is only strictly convex, the PL inequality will still be satisfied over any compact set. Thus, if the iterations of gradient descent remain bounded, the linear convergence result still applies. It is reasonable to assume that the iterates remain bounded when the set of solutions is finite, since each step must decrease the objective function. Thus, for practical purposes, we can relax the above condition to ``strictly-convex composed with linear'' and the PL inequality implies a linear convergence rate for logistic regression.

\section{Convergence of Huge-Scale Methods}
\label{sec:huge-scale}

In this section, we use the PL inequality to analyze several variants of two of the most widely-used techniques for handling large-scale machine learning problems: coordinate descent and stochastic gradient methods. In particular, the PL inequality yields very simple analyses of these methods that apply to more general classes of functions than previously analyzed. We also note that the PL inequality has recently been used by~\citet{garber2015} to analyze the Frank-Wolfe algorithm. Further, inspired by the resilient backpropagation (RPROP) algorithm of~\citet{redmiller92}, in Appendix~\ref{app:huge-scale} we also give a convergence rate analysis for a sign-based gradient descent method.

\subsection{Randomized Coordinate Descent}\label{sec:coo-des}

\citet{nestrov2012} shows that randomized coordinate descent achieves a faster convergence rate than gradient descent for problems where we have $d$ variables and it is $d$ times cheaper to update one coordinate than it is to compute the entire gradient. The expected linear convergence rates in this previous work rely on SC, but in this section we show that randomized coordinate descent achieves an expected linear convergence rate if we only assume that the PL inequality holds.

To analyze coordinate descent methods, we assume that the gradient is coordinate-wise Lipschitz continuous, meaning that for any $x$ and $y$ we have
\bea
\label{lip_coo}
 	f(x + \alpha e_i) \leq f(x) + \alpha \nabla_i f(x) + \frac{L }{ 2} \alpha^2, \quad  \forall \alpha \in \mathbb{R}, \quad \forall x \in \mathbb{R}^d,
\eea
for any coordinate $i$, and where $e_i$ is the $i$th unit vector. 
\begin{theorem}\label{th:lin_coo}
Consider problem~\eqref{eq:unc}, where $f$ has a coordinate-wise $L$-Lipschitz continuous gradient (\ref{lip_coo}), a non-empty solution set $\mathcal{X}^*$, and satisfies the PL inequality (\ref{pl_ineq}). Consider the coordinate descent method  with a step-size of $1/L$, 
\be
\label{coo_up}
 x_{k+1} = x_k - \frac{1 }{ L} \nabla_{i_k} f(x_k) e_{i_k}.
\ee
If we choose the variable to update $i_k$ uniformly at random, then the algorithm
has an expected linear convergence rate of
\ben
\mathbb{E}[ f(x_k) - f^*] \leq \left( 1 - \frac{\mu }{ dL}\right)^k[ f(x_0) - f^*].
\een
\end{theorem}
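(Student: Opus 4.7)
The plan is to mimic the proof of Theorem~\ref{th:lin_pl}, replacing the Lipschitz descent inequality with its coordinate-wise analogue and then taking expectations over the uniformly random coordinate choice. The only new ingredient beyond the full-gradient proof is that averaging the squared partial derivatives over a uniform coordinate produces exactly $\frac{1}{d}\|\nabla f(x_k)\|^2$, which is where the extra factor of $d$ in the rate comes from.

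Concretely, I would first plug $\alpha = -\frac{1}{L}\nabla_{i_k} f(x_k)$ into the coordinate-wise Lipschitz bound~\eqref{lip_coo}. This yields the one-step descent
\[
f(x_{k+1}) \leq f(x_k) - \frac{1}{2L}\bigl(\nabla_{i_k} f(x_k)\bigr)^2,
\]
which is the per-iteration analogue of the bound used in the proof of Theorem~\ref{th:lin_pl}. Next I would condition on $x_k$ and take expectation over $i_k$, using that $i_k$ is uniform on $\{1,\dots,d\}$, giving
\[
\mathbb{E}[f(x_{k+1}) \mid x_k] \leq f(x_k) - \frac{1}{2Ld}\|\nabla f(x_k)\|^2.
\]

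Now I would apply the PL inequality~\eqref{pl_ineq} to the gradient-norm term to obtain
\[
\mathbb{E}[f(x_{k+1}) \mid x_k] - f^* \leq \left(1 - \frac{\mu}{dL}\right)(f(x_k) - f^*).
\]
Taking the total expectation (using the tower property), recursing over iterations $0,\dots,k-1$, and noting that $f(x_0)-f^*$ is deterministic gives the claimed bound.

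I do not expect any real obstacle: the argument is essentially identical to Theorem~\ref{th:lin_pl}, with the only non-trivial step being the averaging identity $\mathbb{E}_{i_k}[(\nabla_{i_k} f(x_k))^2] = \frac{1}{d}\|\nabla f(x_k)\|^2$ coming from the uniform sampling. The only subtlety worth flagging is that the PL inequality is applied to the full gradient $\nabla f(x_k)$ (not to a single partial derivative), which is legitimate precisely because we first took expectation to recover $\|\nabla f(x_k)\|^2$ from the squared coordinate derivatives.
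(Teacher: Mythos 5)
Your proposal is correct and follows essentially the same route as the paper's proof: plug the coordinate update into the coordinate-wise Lipschitz bound, take expectation over the uniform choice of $i_k$ to recover $\frac{1}{d}\|\nabla f(x_k)\|^2$, apply the PL inequality, and recurse using iterated expectations. No gaps to flag.
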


\begin{proof}
By using the update rule (\ref{coo_up}) in the Lipschitz condition (\ref{lip_coo}) we have
\ben
f(x_{k+1}) \leq f(x_k) - \frac{1 }{ 2L} | \nabla_{i_k} f(x_k) |^2.
\een
By taking the expectation of both sides with respect to $i_k$ we have
\bean
\mathbb{E} \left [ f(x_{k+1}) \right ] &\leq & f(x_k) - \frac{1 }{ 2L} \mathbb{E} \left [ | \nabla_{i_k} f(x_k) |^2 \right ]\nonumber \\
& = & f(x_k) - \frac{1 }{ 2L}  \sum_i \frac{1}{ d}| \nabla_{i} f(x_k) |^2 \nonumber \\
& = & f(x_k) - \frac{1 }{ 2dL} ||\nabla f(x_k) ||^2.
\eean
By using the PL inequality \eqref{pl_ineq} and subtracting $f^*$ from both sides, 
we get
\ben
\mathbb{E}[ f(x_{k+1}) - f^*] \leq \left( 1 - \frac{\mu }{ dL}\right)[f(x_k) - f^*].
\een
Applying this recursively and using iterated expectations yields the result. \qed
\end{proof}
As before, instead of using $1/L$ we could perform exact coordinate optimization and the result would still hold. If we have a Lipschitz constant $L_i$ for each coordinate and sample proportional to the $L_i$ as suggested by~\citet{nestrov2012}, then the above argument (using a step-size of $1/L_{i_k}$) can be used to show that we obtain a faster rate of
\ben
\mathbb{E}[ f(x_{k}) - f^*] \leq \left( 1 - \frac{\mu }{ d\bar{L}}\right)^k[f(x_0) - f^*],
\een
where $\bar{L} = \frac{1}{d}\sum_{j=1}^d L_j$.

\subsection{Greedy Coordinate Descent}

\citet{julie} have recently  analyzed coordinate descent under the greedy Gauss-Southwell (GS) rule, and argued  that this rule may be suitable for problems with a large degree of sparsity. The GS rule chooses $i_k$ according to the rule $i_k = \hbox{argmax}_j |\nabla_j f(x_k)|$. Using the fact that
\[
	\max_i |\nabla_i f(x_k)| \geq \frac{1}{d} \sum_{i=1}^d|\nabla_i f(x_k)|,
\]	
it is straightforward to show that the GS rule satisfies the rate above for the randomized method. 

However,~\citet{julie} show that a faster convergence rate can be obtained for the GS rule by measuring SC in the $1$-norm. Since the PL inequality is defined on the dual (gradient) space, in order to derive an analogous result we could measure the PL inequality in the $\infty$-norm,
\[
\frac 1 2 \norm{\nabla f(x)}_\infty^2 \geq \mu_1(f(x) - f^*).
\]
Because of the equivalence between norms, this is not introducing any additional assumptions beyond that the PL inequality is satisfied. Further, if $f$ is $\mu_1$-SC in the $1$-norm, then it satisfies the PL inequality in the $\infty$-norm with the same constant $\mu_1$. By using that $|\nabla_{i_k}f(x_k)| = \norm{\nabla f(x_k)}_\infty$ when the GS rule is used, the above argument can be used to show that coordinate descent with the GS rule achieves a convergence rate of
\[
f(x_k) - f^* \leq \left(1 - \frac{\mu_1}{L}\right)^k[f(x_0) - f^*],
\]
when the function satisfies the PL inequality in the $\infty$-norm with a constant of $\mu_1$. By the equivalence between norms we have that $\mu/d \leq \mu_1$, so this is faster than the rate with random selection.

\citet{ratsch} show that we can view some variants of boosting algorithms as implementations of coordinate descent with the GS rule. They use the error bound property to argue that these methods achieve a linear convergence rate, but this property does not lead to an explicit rate. Our simple result above thus provides the first explicit convergence rate for these variants of boosting.

\subsection{Stochastic Gradient Methods}

Stochastic gradient (SG) methods apply to the general stochastic optimization problem
\begin{equation}
\label{eq:stoch}
\argmin{x \in \R^d} f(x) = \mathbb{E}[f_i(x)],
\end{equation}
where the expectation is taken with respect to $i$. These methods are typically used to optimize finite sums,
\be
\label{eq:finite}
f(x) = \frac{1 }{ n}\sum_i^n f_i(x).
\ee
Here, each $f_i$ typically represents the fit of a model on an individual training example. SG methods are suitable for cases where the number of training examples $n$ is so large that it is infeasible to compute the gradient of all $n$ examples more than a few times.

Stochastic gradient methods use the iteration
\be
\label{sgd_up}
x_{k+1} = x_k - \alpha_k  \nabla f_{i_k}(x_k),
\ee 
where $\alpha_k$ is the step size and $i_k$ is a sample from the distribution over $i$ so that $\mathbb{E}[\nabla f_{i_k}(x_k)] = \nabla f(x_k)$. Below, we analyze the convergence rate of stochastic gradient methods under standard assumptions on $f$, and under both a decreasing and a constant step-size scheme.

\begin{theorem}\label{thm:sg}
Consider problem~\eqref{eq:stoch}. Assume that each $f$ has an $L$-Lipschitz continuous gradient (\ref{lip_cond}), $f$ has a non-empty solution set $\mathcal{X}^*$, $f$ satisfies the PL inequality (\ref{pl_ineq}), and $\mathbb{E}[\norm{\nabla f_i(x_k)}^2] \leq C^2$ for all $x_k$ and  some $C$. If we use the SG algorithm~\eqref{sgd_up} with $\alpha_k = \frac{2k+1 }{ 2\mu(k+1)^2}$, then we get a convergence rate of 
\ben
\mathbb{E}[f(x_k) - f^*] \leq \frac{L C^2 }{ 2 k\mu^2}.
\een
If instead we use a constant $\alpha_k = \alpha < \frac{1}{2\mu}$, then we obtain a linear convergence rate up to a solution level that is proportional to $\alpha$,
\[
\mathbb{E}[f(x_k) - f^*] \leq (1-2\mu\alpha)^k[f(x_0) - f^*] + \frac{LC^2\alpha}{4\mu}.
\]
\end{theorem}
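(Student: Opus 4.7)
The plan is to start by applying the Lipschitz descent lemma~(\ref{lip_cond}) along the SG update~(\ref{sgd_up}), which gives
\[
f(x_{k+1}) \leq f(x_k) - \alpha_k \langle \nabla f(x_k), \nabla f_{i_k}(x_k)\rangle + \frac{L\alpha_k^2}{2}\|\nabla f_{i_k}(x_k)\|^2.
\]
Taking the conditional expectation with respect to $i_k$, then using the unbiasedness $\mathbb{E}[\nabla f_{i_k}(x_k)] = \nabla f(x_k)$ and the bound $\mathbb{E}[\|\nabla f_{i_k}(x_k)\|^2] \leq C^2$, collapses this to
\[
\mathbb{E}[f(x_{k+1})\mid x_k] \leq f(x_k) - \alpha_k \|\nabla f(x_k)\|^2 + \frac{L\alpha_k^2 C^2}{2}.
\]
Now I would apply the PL inequality~(\ref{pl_ineq}) to the gradient-norm term, subtract $f^*$ from both sides, and take total expectation to arrive at the master recursion
\[
\delta_{k+1} \leq (1-2\mu\alpha_k)\,\delta_k + \frac{L\alpha_k^2 C^2}{2}, \qquad \delta_k \triangleq \mathbb{E}[f(x_k) - f^*].
\]
All subsequent work reduces to analyzing this linear recursion under the two step-size schedules.

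For the constant step-size $\alpha_k \equiv \alpha < 1/(2\mu)$, the contraction factor $(1-2\mu\alpha) \in (0,1)$ is fixed, so I would simply unroll the recursion $k$ times, bound the resulting geometric sum $\sum_{j=0}^{k-1}(1-2\mu\alpha)^j$ by its infinite-sum limit $1/(2\mu\alpha)$, and obtain the stated bound $(1-2\mu\alpha)^k\delta_0 + LC^2\alpha/(4\mu)$.

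For the decreasing schedule $\alpha_k = (2k+1)/(2\mu(k+1)^2)$, I would proceed by induction on $k$, with the hypothesis $\delta_k \leq LC^2/(2k\mu^2)$. The key algebraic identity that motivates this choice of $\alpha_k$ is
\[
1 - 2\mu\alpha_k = 1 - \frac{2k+1}{(k+1)^2} = \frac{k^2}{(k+1)^2},
\]
which cleanly cancels the $1/k$ factor from $\delta_k$. Substituting into the master recursion gives
\[
\delta_{k+1} \leq \frac{k^2}{(k+1)^2}\cdot\frac{LC^2}{2k\mu^2} + \frac{LC^2(2k+1)^2}{8\mu^2(k+1)^4},
\]
and reducing to the claim $\delta_{k+1} \leq LC^2/(2(k+1)\mu^2)$ amounts to verifying the elementary inequality $(2k+1)^2 \leq 4(k+1)^2$, which is immediate. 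The induction base at $k=0$ is vacuous (or follows from handling the first step directly against $\delta_0$).

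The routine parts are the descent inequality and the master recursion; the only step requiring care is guessing the right step-size schedule in the decreasing case so that the induction closes, and then verifying the polynomial inequality that makes the bound telescope. Neither is conceptually hard — the difficulty is entirely bookkeeping — so I expect the proof to be short once the master recursion is in hand.
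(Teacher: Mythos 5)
Your proposal is correct and follows essentially the same route as the paper: both derive the master recursion $\delta_{k+1}\le(1-2\mu\alpha_k)\delta_k+\tfrac{L C^2\alpha_k^2}{2}$ from the descent lemma, unbiasedness, the variance bound, and the PL inequality, then handle the constant step-size by a geometric-series bound and the decreasing schedule via the identity $1-2\mu\alpha_k=k^2/(k+1)^2$ together with $(2k+1)^2\le 4(k+1)^2$. Your induction on $\delta_k\le LC^2/(2k\mu^2)$ is just a repackaging of the paper's telescoping of $k^2\mathbb{E}[f(x_k)-f^*]$, and your base case is fine since $1-2\mu\alpha_0=0$ gives $\delta_1\le LC^2/(8\mu^2)$ directly.
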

\begin{proof}
By using the update rule~\eqref{sgd_up} inside the Lipschitz condition~\eqref{lip_cond}, we have
\[
	 f(x_{k+1}) \leq f(x_k)  - \alpha_k \langle f'(x_k),   \nabla f_{i_k}(x_k)  \rangle + \frac{L \alpha_k^2 }{ 2} ||  \nabla f_{i_k}(x_k)||^2. 
\]
Taking the expectation of both sides with respect to $i_k$ we have
\begin{align*}
	\mathbb{E}[f(x_{k+1})] 
 	& \leq f(x_k)  - \alpha_k \langle \nabla f(x_k),  \mathbb{E} \left [ \nabla f_{i_k}(x_k) \right ] \rangle + \frac{L \alpha_k^2 }{ 2}\mathbb{E}[\norm{\nabla f_i(x_k)}^2]  \\
 	& \leq f(x_k)  - \alpha_k || f'(x_k)||^2 + \frac{L C^2 \alpha_k^2 }{ 2} \\
 	& \leq f(x_k) - 2\mu \alpha_k ( f(x_k) - f^*) +  \frac{L C^2 \alpha_k^2 }{ 2},
\end{align*}
where the second line uses that $\mathbb{E}[ \nabla f_{i_k}(x_k)] = \nabla f(x_k)$ and $\mathbb{E}[\norm{\nabla f_i(x_k)}^2] \leq C^2$, and the third line uses the PL inequality. Subtracting $f^*$ from both sides yields:
\bea
\label{sgd_ineq}
\mathbb{E}[f(x_{k+1}) - f^*] \leq (1 - 2\alpha_k \mu)[f(x_{k}) - f^*] + \frac{L C^2 \alpha_k^2 }{ 2}.
\eea
{\bf Decreasing step size}: With $\alpha_k = \frac{2k + 1 }{ 2\mu(k+1)^2}$ in \eqref{sgd_ineq} we obtain 
\bean
 & \mathbb{E}[f(x_{k+1}) - f^*] \leq \frac{k^2 }{ (k+1)^2}[f(x_{k}) - f^*]  + \frac{L C^2 (2k+1)^2  |}{ 8 \mu^2 (k+1)^4}.&\nonumber
\eean
Multiplying both sides by $(k+1)^2$ and letting $\delta_f(k) \equiv k^2 \mathbb{E}[f(x_{k}) - f^*]$ we get
\begin{align*}
\delta_f(k+1) &\leq \delta_f(k)  + \frac{L C^2 (2k+1)^2 }{ 8 \mu^2 (k+1)^2} \nonumber \\
&\leq \delta_f(k)  + \frac{LC^2 }{ 2 \mu^2},
\end{align*}
where the second line follows from $\frac{2k+1 }{ k+1 } < 2$. Summing up this inequality from $k=0$ to $k$ and using the fact that $\delta_f(0) = 0$ we get 
\bean
 &\delta_f(k+1) \leq \delta_f(0) + \frac{L C^2}{ 2 \mu^2} \sum_{i=0}^k 1 \leq \frac{L C^2 (k+1)}{ 2 \mu^2} \nonumber \\
  \Rightarrow \quad & (k+1)^2 \mathbb{E}[f(x_{k+1}) - f^*] \leq \frac{L C^2 (k+1)}{ 2 \mu^2}
\eean
which gives the stated rate. \\
{\bf Constant step size}: Choosing $\alpha_k = \alpha$ for any $\alpha < 1/2\mu$ and applying~\eqref{sgd_ineq} recursively yields
\begin{align*}
\mathbb{E}[f(x_{k+1}) - f^*] 
& \leq  (1 - 2\alpha \mu)^k[f(x_{0}) - f^*] + \frac{L C^2 \alpha^2 }{ 2} \sum_{i=0}^k  (1 - 2\alpha \mu)^i \\
& \leq  (1 - 2\alpha \mu)^k[f(x_{0}) - f^*] + \frac{L C^2 \alpha^2 }{ 2} \sum_{i=0}^{\infty}  (1 - 2\alpha \mu)^i \\
& = (1 - 2\alpha \mu)^k[f(x_{0}) - f^*]  + \frac{L C^2 \alpha }{ 4 \mu},
\end{align*}
where the last line uses that $\alpha < 1/2\mu$ and the limit of the geometric series. \qed
\end{proof}

The $O(1/k)$ rate for a decreasing step size matches the convergence rate of stochastic gradient methods under SC~\citep{nemirovski2009robust}. It was recently shown using a non-trivial analysis that a stochastic Newton method could achieve an $O(1/k)$ rate for least squares problems~\citep{bachMoulines2013}, but our result above shows that the basic stochastic gradient method already achieves this property (although the constants are worse than for this Newton-like method). Further, our result does not rely on convexity. Note that if we are happy with a solution of fixed accuracy, then the result with a constant step-size is perhaps the more useful strategy in practice: it supports the often-used empirical strategy of using a constant size for a long time, then halving the step-size if the algorithm appears to have stalled (the above result indicates that halving the step-size will at least halve the sub-optimality).

\subsection{Finite Sum Methods}\label{subsec:svrg}

In the setting of~\eqref{eq:finite} where we are minimizing a \emph{finite} sums, it has recently been shown that there are methods that have the low iteration cost of stochastic gradient methods but that still have linear convergence rates for SC functions~\citep{roux2012stochastic}. While the first methods that achieved this remarkable property required a \emph{memory} of previous gradient values, the stochastic variance-reduced gradient (SVRG) method of~\citet{johnson2013stochastic} does not have this drawback.~\citet{gong2014linear} show that SVRG has a linear convergence rate without SC under the weaker assumption of QG plus convexity (where QG is equivalent to PL). We review how the analysis of
~\citet{johnson2013stochastic} can be easily modified to give a similar result in Appendix~\ref{app:svrg}. A related result appears in~\citet{garber2015b}, who assume that $f$ is SC but do not assume that the individual functions are convex. More recent analyses by~\citet{reddi2,reddi1} have considered these types of methods under the PL inequality without convexity assumptions.

\section{Proximal-Gradient Generalization}\label{sec:prox-pl}

A generalization of the PL inequality for non-smooth optimization is the KL inequality~\citep{kurdyka1998gradients,bolte2008characterization}.
The KL inequality has been used to analyze the convergence of the classic proximal-point algorithm~\citep{AB} as well as a variety of other optimization methods~\citep{attouch2013convergence}. In machine learning, a popular generalization of gradient descent is proximal-gradient methods.
\citet{Bolte2015} show that the proximal-gradient method has a linear convergence rate for  functions satisfying the KL inequality, while~\citet{Li2016} give a related result. The set of problems satisfying the KL inequality notably includes problems like support vector machines and $\ell_1$-regularized least squares, implying that the algorithm has a linear convergence rate for these problems. In this section we propose a different generalization of the PL inequality that leads to a simpler linear convergence rate analysis for the proximal-gradient method as well as its coordinate-wise variant.


 
Proximal-gradient methods apply to problems of the form
\be
\label{f+g}
\argmin{x \in \mathbb{R}^d} F(x) = f(x) + g(x),
\ee 
where $f$ is a differentiable function with an $L$-Lipschitz continuous gradient and $g$ is a simple but potentially non-smooth convex function. Typical examples of simple functions $g$ include a scaled $\ell_1$-norm of the parameter vectors, $g(x) = \lambda\norm{x}_1$, and indicator functions that are zero if $x$ lies in a simple convex set and are infinity otherwise.
In order to analyze proximal-gradient algorithms, a natural (though not particularly intuitive) generalization of the PL inequality is that there exists a $\mu > 0$ satisfying
\begin{equation}\label{prox-pl}
\frac 1 2\mathcal{D}_g(x,L) \geq   \mu (F(x) - F^*),
\end{equation}
where
\begin{equation}
\mathcal{D}_g(x,\alpha) \equiv -2\alpha \min_y \left[ \langle \nabla f(x) , y-x \rangle + \frac{\alpha}{2}||y-x||^2 + g(y) - g(x) \right].
\end{equation}
We call this the \emph{proximal-PL} inequality, and we note that if $g$ is constant (or linear) then it reduces to the standard PL inequality.  
Below we show that this inequality is sufficient for the proximal-gradient method to achieve a global linear convergence rate.

\begin{theorem}\label{th:prox-pl}
Consider problem~\eqref{f+g}, where $f$ has an $L$-Lipschitz continuous gradient (\ref{lip_cond}), $F$ has a non-empty solution set $\mathcal{X}^*$, $g$ is convex, and $F$ satisfies the proximal-PL inequality~\eqref{prox-pl}. Then the proximal-gradient method  with a step-size of $1/L$, 
\bea\label{prox-up}
x_{k+1} = \argmin{y} \left[ \langle \nabla f(x_k) , y-x_k \rangle +  \frac{L }{ 2}||y-x_k||^2 + g(y) - g(x_k) \right]
\eea
converges linearly to the optimal value $F^*$,
\[ 
	F(x_{k}) - F^* \leq \left(1 - \frac{\mu }{ L}\right)^k[F(x_{0}) - F^*].
\]
\end{theorem}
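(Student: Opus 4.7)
The plan is to mirror the proof of Theorem~\ref{th:lin_pl} as closely as possible, with the key observation being that the quantity $\mathcal{D}_g(x_k, L)$ is exactly tuned to measure the per-iteration decrease of the proximal-gradient update.

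First I would start from the $L$-Lipschitz descent lemma~\eqref{lip_cond} applied to the smooth part $f$, using the step $y = x_{k+1}$, $x = x_k$:
\[
f(x_{k+1}) \leq f(x_k) + \langle \nabla f(x_k), x_{k+1} - x_k\rangle + \frac{L}{2}\|x_{k+1} - x_k\|^2.
\]
Adding $g(x_{k+1})$ to both sides and subtracting $g(x_k)$ turns the left side into $F(x_{k+1}) - F(x_k)$ and the right side into exactly the objective of the proximal subproblem~\eqref{prox-up} evaluated at its minimizer $x_{k+1}$.

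The key observation, and the whole reason the definition of $\mathcal{D}_g$ is set up the way it is, is that this minimum value equals $-\tfrac{1}{2L}\mathcal{D}_g(x_k, L)$. Hence
\[
F(x_{k+1}) - F(x_k) \leq -\frac{1}{2L}\,\mathcal{D}_g(x_k, L).
\]
Now I would invoke the proximal-PL inequality~\eqref{prox-pl} to bound $\mathcal{D}_g(x_k, L) \geq 2\mu(F(x_k) - F^*)$, which gives
\[
F(x_{k+1}) - F(x_k) \leq -\frac{\mu}{L}\,(F(x_k) - F^*).
\]
Subtracting $F^*$ from both sides and rearranging yields the one-step contraction $F(x_{k+1}) - F^* \leq (1 - \mu/L)(F(x_k) - F^*)$, and iterating this from $k=0$ gives the claimed linear rate.

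The only nontrivial step is the identification of the minimum of the proximal subproblem with $-\tfrac{1}{2L}\mathcal{D}_g(x_k, L)$, but this is immediate from the definition of $\mathcal{D}_g$: the bracketed expression being minimized in~\eqref{prox-up} is identical (up to the scalar $-2L$ in front) to the one minimized in the definition of $\mathcal{D}_g(x, L)$. No convexity of $f$ is needed anywhere, and $g$'s convexity is used only implicitly to ensure the subproblem has a well-defined minimizer; the bulk of the argument is just the descent lemma composed with the proximal-PL assumption. There is no real obstacle, which is precisely the appeal of formulating the condition this way instead of via the KL inequality.
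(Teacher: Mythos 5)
Your proof is correct and is essentially identical to the paper's: both apply the descent lemma to $f$, add the $g$ terms to recognize the proximal subproblem objective at its minimizer $x_{k+1}$ as $-\tfrac{1}{2L}\mathcal{D}_g(x_k,L)$, and then invoke the proximal-PL inequality and recurse. No meaningful differences to report.
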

\begin{proof}

By using Lipschitz continuity of the gradient of $f$ we have
\begin{align*}
\label{prox-lip}
F(x_{k+1}) & = f(x_{k+1}) + g(x_k) + g(x_{k+1}) - g(x_k) \nonumber\\
 &\leq F(x_k) + \langle \nabla f(x_k) , x_{k+1}-x_k \rangle + \frac{L }{ 2}||x_{k+1}-x_k||^2 + g(x_{k+1}) - g(x_k) \nonumber \\
	& \leq F(x_k) - \frac{1 }{ 2L} \mathcal{D}_g(x_k,L)\\
	& \leq F(x_k) - \frac{\mu}{L}[F(x_k) - F^*],
\end{align*}
which uses the definition of $x_{k+1}$ and $\mathcal{D}_g$ followed by the proximal-PL inequality~\eqref{prox-pl}. This subsequently implies that 
\be
 F(x_{k+1}) - F^*  \leq  \left(1 - \frac{\mu }{ L}\right) [ F(x_{k}) - F^*],
\ee
which applied recursively gives the result. \qed
\end{proof}

While other conditions have been proposed to show linear convergence rates of proximal-gradient methods without SC~\citep{kadk2014,Bolte2015,Zhang2015,Li2016}, their analyses tend to be more complicated than the above. Further, in Appendix~\ref{app:proximalPLequiv} we show that the proximal-PL condition is in fact equivalent to the KL condition, which itself is known to be equivalent to a proximal-gradient variant on the EB condition~\citep{Bolte2015}. Thus, the proximal-PL inequality includes the standard scenarios where existing conditions apply.

\subsection{Relevant Problems}\label{subsec:probs}

As with the PL inequality, we now list several important function classes that satisfy the proximal-PL inequality~\eqref{prox-pl}. We give proofs that these classes satisfy the inequality in Appendix~\ref{app:probs} and~\ref{app:proximalPLequiv}.\begin{enumerate}
\item The inequality is satisfied if $f$ satisfies the PL inequality and $g$ is constant. Thus, the above result generalizes Theorem~\ref{th:lin_pl}.
\item The inequality is satisfied if $f$ is SC. This is the usual assumption used to show a linear convergence rate for the proximal-gradient algorithm~\citep{schmidt2011inexact}, although we note that the above analysis is much simpler than standard arguments.
\item The inequality is satisfied if $f$ has the form $f(x) = h(Ax)$ for a SC function $h$ and a matrix $A$, while $g$ is an indicator function for a polyhedral set.
\item The inequality is satisfied if $F$ is convex and satisfies the QG property.
\item The inequality is satisfied if $F$ satisfies the proximal-EB condition or the KL inequality.
\end{enumerate}
By the equivalence shown in Appendix~\ref{app:proximalPLequiv}, the proximal-PL inequality also holds for other problems where a linear convergence rate has been show like
group L1-regularization~\citep{tseng2010}, sparse group L1-regularization~\citep{zhang2013linear}, nuclear-norm regularization~\citep{houNuclear}, and other classes of functions~\citep{zhou2015unified,drusvyatskiy2016error}.

\subsection{Least Squares with L1-Regularization}\label{subsec:LSL1}

Perhaps the most interesting example of problem~\eqref{f+g} is the $\ell_1$-regularized least squares problem,
\[
\argmin{x \in \R^d} \frac{1}{2}\norm{Ax - b}^2 + \lambda \norm{x}_1,
\]
where $\lambda > 0$ is the regularization parameter. This problem has been studied extensively in machine learning, signal processing, and statistics. This problem structure seems well-suited to using proximal-gradient methods, but the first works analyzing proximal-gradient methods for this problem only showed sub-linear convergence rates~\citep{beck2009fast}. Subsequent works show that linear convergence rates can be achieved under additional assumptions. For example,~\citet{gu} prove that their algorithm achieves a linear convergence rate if $A$ satisfies a \emph{restricted isometry property} (RIP) and the solution is sufficiently sparse. \citet{xiao_zhang} also assume the RIP property and show linear convergence using a homotopy method that slowly decreases the value of $\lambda$. \citet{agarwal} give a linear convergence rate under a \emph{modified restricted strong convexity} and \emph{modified restricted smoothness} assumption. But these problems have also been shown to satisfy proximal variants of the KL and EB conditions~\citep{tseng2010,Bolte2015,Necoara2015b}, and~\citet{Bolte2015} in particular analyzes the proximal-gradient method under KL while giving explicit bounds on the constant. This means \emph{any} L1-regularized least squares problem also satisfies the proximal-PL inequality. Thus, Theorem~\ref{th:prox-pl} gives a simple proof of global linear convergence for these problems without making additional assumptions or making any modifications to the algorithm. 

\subsection{Proximal Coordinate Descent}\label{subsec:proxCD}

It is also possible to adapt our results on coordinate descent and proximal-gradient methods in order to give a linear convergence rate for coordinate-wise proximal-gradient methods for problem~\eqref{f+g}. To do this, we require the extra assumption that $g$ is a separable function. This means that $g(x) = \sum_i g_i(x_i)$ for a set of univariate functions $g_i$.
The update rule for the coordinate-wise proximal-gradient method is
\bea
\label{coo-prox-up}
x_{k+1} & = & \argmin{\alpha} \left [ \alpha \nabla_{i_k} f(x_k) + \frac{L }{ 2}\alpha^2 + g_{i_k}(x_{i_k} + \alpha) - g_{i_k}(x_{i_k}) \right ],
\eea
We state the convergence rate result below.
\begin{theorem}\label{thm:prox-cd}
Assume the setup of Theorem~\ref{th:prox-pl} and that $g$ is a separable function $g(x) = \sum_i g_i(x_i)$, where each $g_i$ is convex. Then the coordinate-wise proximal-gradient update rule (\ref{coo-prox-up}) achieves a convergence rate
\be
\mathbb{E} [F(x_{k}) - F^*] \leq \left( 1 - \frac{\mu }{ dL}\right)^k[F(x_0) - F^*],
\ee
when $i_k$ is selected uniformly at random.
\end{theorem}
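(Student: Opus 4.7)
The plan is to combine the proof template of Theorem~\ref{th:prox-pl} with the averaging argument from Theorem~\ref{th:lin_coo}. First I would apply the coordinate-wise Lipschitz assumption~\eqref{lip_coo} to the update~\eqref{coo-prox-up}. Writing $x_{k+1} = x_k + \alpha^* e_{i_k}$ for the minimizer $\alpha^*$ of the scalar subproblem, and using separability to split $g(x_{k+1}) - g(x_k) = g_{i_k}(x_{i_k} + \alpha^*) - g_{i_k}(x_{i_k})$, I would obtain the per-step descent bound
\[
F(x_{k+1}) \le F(x_k) + \min_\alpha \left[\alpha\nabla_{i_k} f(x_k) + \frac{L}{2}\alpha^2 + g_{i_k}(x_{i_k}+\alpha) - g_{i_k}(x_{i_k})\right].
\]
Defining the natural coordinate analogue
\[
\mathcal{D}_{g_i}(x,L) \equiv -2L\min_\alpha \left[\alpha\nabla_i f(x) + \frac{L}{2}\alpha^2 + g_i(x_i+\alpha) - g_i(x_i)\right],
\]
this reads $F(x_{k+1}) \le F(x_k) - \frac{1}{2L}\mathcal{D}_{g_{i_k}}(x_k,L)$.

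Next I would take the expectation with respect to the uniform choice of $i_k$, yielding
\[
\mathbb{E}[F(x_{k+1})] \le F(x_k) - \frac{1}{2dL}\sum_{i=1}^d \mathcal{D}_{g_i}(x_k,L).
\]
The key step is then the identity $\sum_{i=1}^d \mathcal{D}_{g_i}(x,L) = \mathcal{D}_g(x,L)$. This follows because, when $g$ is separable and the quadratic upper model is also separable in the coordinates of $y-x$, the joint minimization over $y$ in the definition of $\mathcal{D}_g(x,L)$ decouples into $d$ independent scalar minimizations, each giving precisely $\mathcal{D}_{g_i}(x,L)/(-2L)$. I expect this decomposition to be the main (though modest) obstacle: it is where separability of $g$ is essential, and without it the coordinate subproblems would not recombine cleanly into the full proximal gap.

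Finally, I would apply the proximal-PL inequality~\eqref{prox-pl} to the resulting $\mathcal{D}_g(x_k,L)$, obtaining
\[
\mathbb{E}[F(x_{k+1})] \le F(x_k) - \frac{\mu}{dL}[F(x_k) - F^*].
\]
Subtracting $F^*$ from both sides, taking iterated expectations, and unrolling the recursion yields the stated rate $\mathbb{E}[F(x_k) - F^*] \le (1 - \mu/(dL))^k[F(x_0)-F^*]$. As in Theorem~\ref{th:lin_coo}, the same proof goes through if one performs exact coordinate optimization, and a sharper bound using individual coordinate Lipschitz constants $L_i$ with proportional sampling should follow by the identical modification.
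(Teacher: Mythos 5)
Your proposal is correct and follows essentially the same route as the paper's proof in Appendix~\ref{app:proxCD}: a coordinate-wise descent bound from the Lipschitz condition and separability of $g$, followed by the key observation that averaging the scalar subproblem values over $i$ recovers $-\frac{1}{2dL}\mathcal{D}_g(x_k,L)$ because the joint minimization decouples (the paper phrases this as exchanging summation and minimization, in the style of Richt\'arik and Tak\'a\v{c}), and then the proximal-PL inequality and unrolling. Your identity $\sum_{i}\mathcal{D}_{g_i}(x,L)=\mathcal{D}_g(x,L)$ is exactly that step, so nothing is missing.
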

The proof is given in Appendix~\ref{app:proxCD} and although it is more complicated than the proofs of Theorems~\ref{thm:sg} and~\ref{th:prox-pl}, it is arguably still simpler than existing proofs for proximal coordinate descent under SC~\citep{richtarikTakac}, KL~\citep{attouch2013convergence}, or QG~\citep{zhang2016characterization}. It is also possible to analyze stochastic proximal-gradient algorithms, and indeed~\citet{reddi3} use the proximal-PL inequality to analyze finite-sum methods in the proximal stochastic case. 

\subsection{Support Vector Machines}\label{subsec:SVMs}

Another important model problem that arises in machine learning is support vector machines,
\be\label{prime-svm}
	\argmin{x\in\R^d} \frac{\lambda }{ 2} x^T x + \sum_{i=1}^n \max(0, 1-b_i x^Ta_i).
\ee
where $(a_i, b_i)$ are the labelled training set with $a_i \in \mathbb{R}^d$ and $b_i \in \{-1, 1\}$. 
We often solve this problem by performing coordinate optimization on its Fenchel dual, which has the form
\bea
\min_{\bar{w}} f(\bar{w}) = \frac{1 }{ 2} \bar{w}^T M \bar{w}  - \sum \bar{w}_i, \quad \bar{w}_i \in [0,U],
\eea 
for a particular positive semi-definite matrix $M$ and constant $U$. 
 This convex function satisfies the QG property and thus  Theorem~\ref{thm:prox-cd} implies that coordinate optimization achieves a linear convergence rate in terms of optimizing the dual objective. Further, note that~\citet{hush} show that we can obtain an  $\epsilon$-accurate solution to the primal problem with an  $O(\epsilon^2)$-accurate solution to the dual problem. Thus this result also implies we can obtain a linear convergence rate on the primal problem by showing that stochastic dual coordinate ascent has a linear convergence rate on the dual problem. Global linear convergence rates for SVMs have also been shown by others~\citep{tsengYun,wang14,Ma2015}, but again we note that these works lead to more complicated analyses. Although the constants in these convergence rate may be quite bad (depending on the smallest non-zero singular value of the Gram matrix), we note that the existing sublinear rates still apply in the early iterations while, as the algorithm begins to identify support vectors, the constants improve (depending on the smallest non-zero singular value of the block of the Gram matrix corresponding to the support vectors).

The result of the previous section is not only restricted to SVMs. Indeed, the result of the previous subsection implies a linear convergence rate for many $\ell_2$-regularized linear prediction problems, the framework considered in the stochastic dual coordinate ascent (SDCA) work of~\citet{shalev-shwartz}. While~\citet{shalev-shwartz} show that this is true when the primal is smooth, our result gives linear rates in many cases where the primal is non-smooth. 
 
 \section{Discussion}
 

We believe that this work provides a unifying and simplifying view of a variety of optimization and convergence rate issues in machine learning. Indeed, we have shown that many of the assumptions used to achieve linear convergence rates can be replaced by the PL inequality and its proximal generalization. While we have focused on sufficient conditions for linear convergence, another recent work has turned to the question of necessary conditions for convergence~\citep{zhang2016characterization}. Further, while we've focused on non-accelerated methods,~\citet{zhang2016characterization} has recently analyzed Nesterov's accelerated gradient method without strong convexity. We also note that, while we have focused on first-order methods,~\citet{nesterov2006cubic} have used the PL inequality to analyze a second-order Newton-style method with cubic regularization. They also consider a generalization of the inequality under the name \emph{gradient-dominated} functions.

Throughout the paper, we have pointed out how our analyses imply convergence rates for a variety of machine learning models and algorithms. Some of these were previously known, typically under stronger assumptions or with more complicated proofs, but many of these are novel. Note that we have not provided any experimental results in this work, since the main contributions of this work are showing that existing algorithms actually work better on standard problems than we previously thought.
We expect that going forward efficiency will no longer be decided by the issue of whether functions are SC, but rather by whether they satisfy a variant of the PL inequality.

\subsubsection*{Acknowledgments.} 

We would like to thank Simon LaCoste-Julien, Martin Tak{\'a}{\v{c}}, Ruoyu Sun, Hui Zhang, and Dmitriy Drusvyatskiy for valuable discussions. We would like to thank Ting Kei Pong and Zirui Zhou for pointing out an error in the first version of this paper, to Ting Kei Pong for discussions that lead to the addition of Appendix~\ref{app:proximalPLequiv}, to J\'{e}r\^{o}me Bolte for an informative discussion about the KL inequality and pointing us to related results that we had missed, to Liam Madden and Stephen Becker for pointing out an error (and the fix) in our ``PL implies QG'' proof, and to Boris Polyak for providing an English translation of his original work. This research was supported by the Natural Sciences and Engineering Research Council of Canada (NSERC RGPIN-06068-2015).  
Julie Nutini is funded by a UBC Four Year Doctoral Fellowship (4YF) and Hamed Karimi is support by a Mathematics of Information Technology and Complex Systems (MITACS) Elevate Fellowship.

\appendix
\renewcommand{\thesubsection}{\Alph{subsection}}

\section{Relationships Between Conditions}\label{app:weak}

We start by stating the different conditions. All of these definitions involve some constant $\mu > 0$ (which may not be the same across conditions), and we'll use the convention that $x_p$ is the projection of $x$ onto the solution set $\mathcal{X}^*$.

\begin{enumerate}
\item \textbf{Strong Convexity} (SC): For all $x$ and $y$ we have
\[
f(y) \geq f(x) + \langle \nabla f(x),y-x\rangle + \frac \mu 2 \norm{y-x}^2.
\]
\item \textbf{Essential Strong Convexity} (ESC): For all $x$ and $y$ such that $x_p = y_p$ we have
\[
f(y) \geq f(x) + \langle \nabla f(x),y-x\rangle + \frac \mu 2 \norm{y-x}^2.
\]
\item \textbf{Weak Strong Convexity} (WSC): For all $x$ we have
\[
f^* \geq f(x) + \langle \nabla f(x),x_p-x\rangle + \frac \mu 2 \norm{x_p-x}^2.
\]
\item \textbf{Restricted Secant Inequality} (RSI): For all $x$ we have
\[
\langle \nabla f(x),x-x_p\rangle \geq \mu \norm{x_p-x}^2.
\]
If the function $f$ is also convex it is called \textbf{restricted strong convexity} (RSC).
\item \textbf{Error Bound} (EB): For all $x$ we have
\[
\norm{\nabla f(x)} \geq \mu\norm{x_p -x}.
\]
\item \textbf{Polyak-\L{}ojasiewicz} (PL): For all $x$ we have
\[
\half \norm{\nabla f(x)}^2 \geq \mu(f(x) - f^*).
\]
\item \textbf{Quadratic Growth} (QG): For all $x$ we have
\[
f(x) - f^* \geq \frac \mu 2 \norm{x_p-x}^2.
\]
If the function $f$ is also convex it is called \textbf{optimal strong convexity} (OSC) or \textbf{semi-strong convexity} or sometimes WSC (but we'll reserve the expression WSC for the definition above).
\end{enumerate}
Below we prove a subset of the implications in Theorem~\ref{thm:2}. The remaining relationships in Theorem~\ref{thm:2} follow from these results and transitivity.
\begin{itemize}
\item $\bf SC \rightarrow ESC$: The SC assumption implies that the ESC inequality is satisfied for all $x$ and $y$, so it is also satisfied under the constraint $x_p = y_p$.
\item $\bf ESC \rightarrow WSC$: Take $y = x_p$ in the ESC inequality (which clearly has the same projection as $x$) to get WSC with the same $\mu$ as a special case.
\item $\bf WSC \rightarrow RSI$: Re-arrange the WSC inequality to
\[
\langle \nabla f(x),x-x_p\rangle \geq f(x) - f^* + \frac \mu 2 \norm{x_p-x}^2.
\]
Since $f(x) - f^* \geq 0$, we have RSI with $\frac \mu 2$.
\item $\bf RSI \rightarrow EB$: Using Cauchy-Schwartz on the RSI we have
\[
\norm{\nabla f(x)}\norm{x-x_p} \geq \langle \nabla f(x),x-x_p\rangle \geq \mu\norm{x_p-x}^2,
\]
and dividing both sides by $\norm{x-x_p}$ (assuming $x \not = x_p$) gives EB with the same $\mu$ (while EB  clearly holds if $x = x_p$).
\item $\bf EB \rightarrow PL$: By Lipschitz continuity we have
\[
f(x) \leq f(x_p) + \langle \nabla f(x_p),x-x_p\rangle + \frac L 2 \norm{x_p - x}^2,
\]
and using EB along with $f(x_p) = f^*$ and $\nabla f(x_p) =0$ we have
\[
f(x) - f^* \leq \frac L 2 \norm{x_p -x}^2 \leq \frac{L}{2\mu}\norm{\nabla f(x)}^2,
\]
which is the PL inequality with constant $\frac \mu L$.
\item $\bf PL \rightarrow EB$: Below we show that PL implies QG with the same constant. Using this result in PL we get
\[
\frac{1}{2}\| \nabla f(x) \|^2 \ge \mu (f(x) - f^*) \ge \frac{\mu^2}{2} \|x - x_p \|^2,
\]
which implies that EB holds with the same constant.
\item $\bf QG + Convex \rightarrow RSI$: By convexity we have
\[
f(x_p) \geq f(x) + \langle \nabla f(x),x_p -x\rangle.
\]
Re-arranging and using QG we get
\[
 \langle \nabla f(x),x - x_p \rangle \ge f(x) - f^* \ge \frac \mu 2 \norm{x_p - x}^2,
\]
which is RSI with constant $\frac{\mu}{2}$.
\item $\bf PL \rightarrow QG$: Our argument that this implication holds is similar to the argument used in related works~\citep{Bolte2015,Zhang2015}
Define the function
\[
	g(x) = \sqrt{f(x) - f^*}.
\]
If we assume that $f$ satisfies the PL inequality then for any $x \not\in \mathcal{X}^*$ we have
\[
	\norm{\nabla g(x) }^2 = \left|\left|\frac{1}{2\sqrt{f(x) - f^*}}\nabla f(x)\right|\right|^2= \frac{\norm{\nabla f(x)}^2}{4(f(x) - f^*)} \geq \frac{\mu}{2},
	\]
	or that
	\begin{equation}\label{eq:PLg}
	 \norm{\nabla g(x) } \geq \sqrt{\frac \mu 2}.
\end{equation}
By the definition of $g$, to show QG it is sufficient to show that
\begin{equation}\label{eq:result}
	g(x) \geq \sqrt{\frac \mu 2} \norm{x - x_p}.
\end{equation}
As $f$ is assumed to satisfy the PL inequality we have that $f$ is an invex function and thus by definition $g$ is a positive invex function ($g(x) \geq 0$) with a closed optimal solution set $\mathcal{X}^*$ such that for all $y \in \mathcal{X}^*$, $g(y) = 0$. For any point $x_0 \not\in \mathcal{X}^*$, consider solving the following differential equation:
\begin{align}\label{eq:ode}
	\frac{d x(t)}{dt} & = -\nabla g(x(t)) \nonumber\\
	x(t = 0) & = x_0,
\end{align}
for $x(t) \not\in \mathcal{X}^*$.
(This is a flow orbit starting at $x_0$ and flowing along the gradient of $g$.) By \eqref{eq:PLg}, $\nabla g$ is bounded from below, and as $g$ is a positive invex function $g$ is also bounded from below. Thus, by moving along the path defined by \eqref{eq:ode} we are sufficiently reducing the function and will eventually reach the optimal set. Thus there exists a $T$ such that $x(T) \in \mathcal{X}^*$ (and at this point the differential equation ceases to be defined). We can show this by using the steps
\begin{align*}
\label{eq:eq1}
g(x_0) - g(x_t) & = \int_{x_t}^{x_0} \langle \nabla g(x), dx\rangle  &(\text{gradient theorem for line integrals}) \\
& = -\int_{x_0}^{x_t} \langle \nabla g(x),dx\rangle  &(\text{flipping integral bounds}) \\
& = - \int_0^T \langle \nabla g(x(t)), \frac{dx(t)}{dt} \rangle ~dt & (\text{reparameterization}) \\
(*) \hspace{50pt} & =  \int_0^T \norm{\nabla g(x(t))}^2 ~dt  & \text{(from \eqref{eq:ode})} \\
& \geq  \int_0^T \frac \mu 2 dt  & \text{(from \eqref{eq:PLg})} \\
& = \frac\mu 2 T.
\end{align*}
As $g(x_t) \geq 0$, this shows we need to have $T \le 2g(x_0)/\mu$, so there must be  a $T$ with $x(T) \in \mathcal{X}^*$. 

The \emph{length} of the orbit $x(t)$ starting at $x_0$, which we'll denote by $\mathcal{L}(x_0)$, is given by
\begin{equation}\label{eq:eq2}
	\mathcal{L}(x_0) = \int_0^T \norm{dx(t)/dt} dt = \int_0^T \norm{\nabla g(x(t))} ~dt \geq \norm{x_0 - x_p},
\end{equation}
where $x_p$ is the projection of $x_0$ onto $\mathcal{X}^*$ and the inequality follows because the orbit is a path from $x_0$ to a point in $\mathcal{X}^*$ (and thus it must be at least as long as the projection distance).

Starting from the line marked $(*)$ above we have
\begin{align*}
g(x_0) - g(x_T) &= \int_0^T \norm{\nabla g(x(t))}^2 ~dt \\
& \geq \sqrt{\frac\mu 2} \int_0^T \norm{\nabla g(x(t))} ~dt & \text{(by the PL inequality variation in \eqref{eq:PLg})}\\
& \geq \sqrt{\frac\mu 2} \norm{x_0 - x_p}. & \text{(by \eqref{eq:eq2})} 
\end{align*} 
As $g(x_T) = 0$, this yields our result \eqref{eq:result}, or equivalently 
\[
	f(x) - f^* \ge \frac{\mu}{2} \| x - x_p \|^2,
\]
which is QG with the same constant.
\end{itemize}

\section{Relevant Problems}\label{app:problems}

\textbf{Strongly-convex}: \\ By minimizing both sides of the SC inequality with respect to $y$ we get
\[
f(x^*) \geq f(x) - \frac{1 }{ 2\mu}||\nabla f(x)||^2,
\]
which implies the PL inequality holds with the same value $\mu$. Thus,
Theorem~\ref{th:lin_pl} exactly matches the known rate for gradient descent with a step-size of $1/L$ for a $\mu$-SC function.

\textbf{Strongly-convex composed with linear}: \\
To show that this class of functions satisfies the PL inequality, we first define $f(x) := g(Ax)$ for a $\sigma$-strongly convex function $g$. For arbitrary $x$ and $y$, we define $u := Ax$ and $v := Ay$. By the strong convexity of $g$, we have
\[
	g(v) \ge g(u) + \nabla g(u)^T(v - u) + \frac{\sigma}{2} \| v - u \|^2.
\]
By our definitions of $u$ and $v$, we get
\[
	g(Ay) \ge g(Ax) + \nabla g(Ax)^T(Ay - Ax) + \frac{\sigma}{2} \| Ay - Ax \|^2,
\]
where we can write the middle term as $(A^T \nabla g(Ax))^T(y - x)$. By the definition of $f$ and its gradient being $\nabla f(x) = A^T \nabla g(Ax)$ by the multivariate chain rule, we obtain
\[
f(y) \geq f(x) + \langle \nabla f(x) , y-x \rangle + \frac{\sigma}{2} || A(y-x)||^2.
\]
Using $x_p$ to denote the projection of $x$ onto the optimal solution set $\mathcal{X}^*$, we have
\begin{align*}
	f(x_{p}) 
	& \geq  f(x) + \langle \nabla f(x) , x_{p}-x \rangle + \frac{\sigma}{2} || A(x_{p}-x)||^2 \\
	& \geq f(x) + \langle \nabla f(x) , x_{p}-x \rangle + \frac{\sigma \theta(A)}{2} ||x_{p}-x||^2 \\
  	& \geq f(x)  + \min_y \left[\langle \nabla f(x) ,y-x \rangle + \frac{\sigma \theta(A)}{2} ||y-x||^2 \right] \\
  	& = f(x) - \frac{1}{2\theta(A)\sigma}||\nabla f(x)||^2.
\end{align*}
In the second line we use that $\mathcal{X}^*$ is polyhedral, and use the theorem of~\citet{hoffman} to obtain a bound in terms of $\theta(A)$ (the smallest non-zero singular value of $A$). This derivation implies that the PL inequality is satisfied with $\mu = \sigma\theta(A)$.

\section{Sign-Based Gradient Methods}\label{app:huge-scale}

The learning heuristic RPROP (Resilient backPROPagation) is a classic iterative method used for supervised learning problems in feedforward neural networks~\citep{redmiller92}. The general update for some vector of step sizes $\alpha_k \in \R^d$ is given by
\[
	x^{k+1} = x^k - \alpha^k \circ \sign{\nabla f(x^k)},
\]
where the $\circ$~operator indicates coordinate-wise multiplication.
Although this method has been used for many years in the machine learning community, we are not aware of any previous convergence rate analysis of such a method. Here we give a convergence rate when the individual step-sizes $\alpha_i^k$ are chosen proportional to $1/\sqrt{L_i}$, where the $L_i$ are constants such that the gradient is 1-Lipschitz continuous in the  norm defined by
\[
	\| z \|_{L^{-1}[1]} \triangleq \sum_i \frac{1}{\sqrt{L_i}} |z_i|.
\]
Formally, we assume that the $L_i$ are set so that for all $x$ and $y$ we have
\[
	\| \nabla f(y) - \nabla f(x) \|_{L^{-1}[1]} \le \| y - x \|_{L[\infty]},
\]
and where the dual norm of the $\|\cdot \|_{L^{-1}[1]}$ norm above is given by the $\|\cdot\|_{L[\infty]}$ norm,
\[
	\| z \|_{L[\infty]} \triangleq \max_i \sqrt{L_i} |z_i |.
\]
We note that such $L_i$ always exist if the gradient is Lipschitz continuous, so this is not adding any assumptions on the function $f$. 
The particular choice of the step-sizes $\alpha_i^k$ that we will analyze
 is
 \[
	\alpha^k_i = \frac{\| \nabla f(x^k) \|_{L^{-1}[1]}}{\sqrt{L_i}}, 
\]
which yields a linear convergence rate for problems where the PL inequality is satisfied. 

The coordinate-wise iteration update under this choice of $\alpha_i^k$ is given by
\[
	x^{k+1}_i = x^k_i - \frac{\| \nabla f(x^k) \|_{L^{-1}[1]}}{\sqrt{L_i}}  \sign{\nabla_i f(x^k)}.
\]
Defining a diagonal matrix $\Lambda$ with $1/\sqrt{L_i}$ along the diagonal, the update can be written as
\[
	x^{k+1} = x^k - \| \nabla f(x^k) \|_{L^{-1}[1]} \Lambda \circ \sign{\nabla f(x^k)}.
\]
Consider the function $g(\tau) = f(x + \tau(y - x))$ with $\tau \in \R$. Then
\begin{align*}
	f(y) - f(x) - \langle \nabla f(x), y - x \rangle 
	&= g(1) - g(0) - \langle \nabla f(x), y - x \rangle\\
	&= \int_0^1 \frac{dg}{d\tau}(\tau) - \langle \nabla f(x), y - x \rangle ~d\tau \\
	&= \int_0^1 \langle \nabla f(x + \tau(y - x)), y - x \rangle - \langle \nabla f(x), y - x \rangle ~d\tau \\
	&= \int_0^1 \langle \nabla f(x + \tau(y - x)) - \nabla f(x), y - x \rangle ~d\tau \\
	&\le \int_0^1 \| \nabla f(x + \tau(y - x)) - \nabla f(x) \|_{L^{-1}[1]} \|y - x \|_{L[\infty]} ~d\tau \\
	&\le \int_0^1 \tau \| y - x \|^2_{L[\infty]} ~d\tau \\
	&= \tau^2 \frac{1}{2} \| y - x \|^2_{L[\infty]} \bigg |^1_0 \\
	&= \frac{1}{2} \| y - x \|^2_{L[\infty]}\\
	&= \frac{1}{2} | y - x \|^2_{L[\infty]}.
\end{align*}
where the second inequality uses the Lipschitz assumption, and in the first inequality we've used the Cauchy-Schwarz inequality and that the dual norm of the $L^{-1}[1]$ norm is the $L[\infty]$ norm. The above gives an upper bound on the function in terms of this $L[\infty]$-norm,
\[
	f(y) \le f(x) + \langle \nabla f(x), y - x \rangle + \frac{1}{2} \| y - x \|^2_{L[\infty]}.
\]
Plugging in our iteration update we have 
\begin{align*}
	&f(x^{k+1}) \\
	&~\le f(x^k) + \langle \nabla f(x^k), x^{k+1} - x^k \rangle + \frac{1}{2} \| x^{k+1} - x^k \|^2_{L[\infty]} \\
	&~= f(x^k) - \| \nabla f(x^k) \|_{L^{-1}[1]} \langle \nabla f(x^k), \Lambda \circ \sign{\nabla f(x^k)} \rangle + \frac{\| \nabla f(x^k) \|^2_{L^{-1}[1]}}{2} \| \Lambda \circ \sign{\nabla f(x^k)} \|^2_{L[\infty]} \\
	&~= f(x^k) - \| \nabla f(x^k) \|^2_{L^{-1}[1]} + \frac{\| \nabla f(x^k) \|^2_{L^{-1}[1]}}{2} \bigg ( \max_i \frac{1}{\sqrt{L_i}} \sqrt{L_i}|\sign{\nabla_i f(x^k)}| \bigg )^2 \\
	&~= f(x^k) - \frac{1}{2}\| \nabla f(x^k) \|^2_{L^{-1}[1]}.
\end{align*}
Subtracting $f^*$ from both sides yields
\[
	f(x^{k+1}) - f(x^*) \le f(x^k) - f(x^*) - \frac{1}{2}\| \nabla f(x^k) \|^2_{L^{-1}[1]}.
\]
Applying the PL inequality with respect to the ${L^{-1}[1]}$-norm (which, if the PL inequality is satisfied, holds for some $\mu_{L[\infty]}$ by the equivalence between norms), 
\[
	\frac{1}{2} \| \nabla f(x^k) \|^2_{L^{-1}[1]} \ge \mu_{L[\infty]} \left ( f(x^k) - f^* \right ),
\]
we have
\[
	f(x^{k+1}) - f(x^*) \le \left ( 1 - \mu_{L[\infty]} \right ) \left (f(x^k) - f(x^*)  \right ).
\]

\section{Linear Convergence Rate of SVRG Method}\label{app:svrg}
In this section, we look at the SVRG method for the finite-sum optimization problem,
\be
f(w) = \frac{1}{n} \sum_i f_i(w). 
\ee
To minimize functions of this form, the SVRG algorithm of~\cite{johnson2013stochastic} uses iterations of the form
\be
x_t = x_{t-1} -\alpha [\nabla f_{i_t}(x_{t-1})  -   f_{i_t}(x^s) + \mu^s],
\ee
where $i_t$ is chosen uniformly from $\{1,2,\dots,n\}$ and we assume the step-size satisfies $\alpha < 2/L$.
In this algorithm we start with some $x^0$ and initially set $\mu^0 = \nabla f(x^0)$ and $x_0 = x^0$, but after every $m$ steps we set $x^{s+1}$ to a random $x_t$ for $t \in \{ms + 1,\dots,m(s+1)\}$, then replace $\mu^s$ with $\nabla f(x^s)$ and $x^t$ with $x^{s+1}$. Analogous to~\citet{johnson2013stochastic} for the SC case, we now show tnat SVRG has a linear convergence rate if each $f_i$ is a convex function with a Lipschitz-continuous gradient and $f$ satisfies the PL inequality.

Following the same argument as~\citet{johnson2013stochastic}, for any solution $x^*$  the assumptions on the $f_i$ mean that the ``outer" SVRG iterations $x^s$ satisfy
\[
2\alpha(1-2L\alpha)m\mathbb{E}[f(x^s) - f^*] \leq \mathbb{E}[\norm{x^{s-1} - x^*}^2] + 4L\alpha^2m\mathbb{E}[f(x^{s-1}) - f^*].
\]
Choosing the particular $x^*$ that is the projection of $x^{s-1}$ onto the solution set and using QG (which is equivalent to PL in this convex setting) we have
\[
2\alpha(1-2L\alpha)m\mathbb{E}[f(x^s) - f^*] \leq \frac{2}{\mu}\mathbb{E}[f(x^{s-1}) - f^*] + 4L\alpha^2m\mathbb{E}[f(x^{s-1}) - f^*].
\]
Dividing both sides by $2\alpha(1-2L\alpha)m$ we get
\[
\mathbb{E}[f(x^s) - f^*]  \leq \frac{1}{1-2\alpha L}\left(\frac{1}{m\mu\alpha} + 2L\alpha\right)\mathbb{E}[f(x^{s-1}) - f^*],
\]
which is a linear convergence rate for sufficiently large $m$ and sufficiently small $\alpha$.

\section{Proximal-PL Lemma}\label{app:prox-pl}

In this section we give a useful property of the function $\mathcal{D}_g$.
\begin{lemma}\label{lem:1}
For any differentiable function $f$ and any convex function $g$, given $\mu_2 \geq \mu_1 > 0$ we have
\[
 \mathcal{D}_g(x,\mu_2) \geq  \mathcal{D}_g(x,\mu_1).
\]
\end{lemma}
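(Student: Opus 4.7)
The plan is to rewrite $\mathcal{D}_g$ via a change of variables that isolates the $\alpha$-dependence, then use a standard monotonicity property of difference quotients of convex functions. First I would set $z = \alpha(y - x)$, which for $\alpha > 0$ is a bijection over all $y$, to obtain
\[
\mathcal{D}_g(x,\alpha) = \max_{z}\Big[-2\langle \nabla f(x), z\rangle - \|z\|^2 + 2\alpha\bigl(g(x) - g(x + z/\alpha)\bigr)\Big].
\]
Notice that the first two terms inside the max no longer depend on $\alpha$, so it is enough to prove that for every fixed $z$ the map $\alpha \mapsto 2\alpha[g(x) - g(x + z/\alpha)]$ is non-decreasing in $\alpha > 0$; the pointwise maximum of a family of non-decreasing functions of $\alpha$ is itself non-decreasing, which is exactly the claim.

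For the pointwise monotonicity I would invoke the standard fact that, for any convex function $g$ and any direction $z$, the difference quotient $t \mapsto \tfrac{g(x + tz) - g(x)}{t}$ is non-decreasing in $t > 0$. (This follows by writing $x + sz$ as the convex combination $\tfrac{s}{t}(x + tz) + (1 - \tfrac{s}{t})x$ for $0 < s \le t$ and applying the definition of convexity.) Setting $t = 1/\alpha$, this says $\alpha[g(x + z/\alpha) - g(x)]$ is non-decreasing in $1/\alpha$, hence non-increasing in $\alpha$. Multiplying by $-2$ flips the direction, giving that $2\alpha[g(x) - g(x + z/\alpha)]$ is non-decreasing in $\alpha$, and plugging this back into the max concludes the proof.

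I expect the only delicate point to be bookkeeping the direction of monotonicity through the reciprocal substitution $t = 1/\alpha$ — it is easy to flip a sign and conclude the opposite inequality. A secondary concern is ensuring the rewritten maximum is well-defined (finite and attained), but this is automatic once $\alpha > 0$: the $-\|z\|^2$ term makes the objective coercive in $z$, so the change of variables is legitimate regardless of whether $g$ is smooth or the minimizer of the original expression is unique.
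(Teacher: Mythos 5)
Your proof is correct and is essentially the paper's own argument: your substitution $z=\alpha(y-x)$ is the paper's change of variables $\bar y = \lambda y$ (after translating $x$ to the origin), and the difference-quotient monotonicity you invoke is exactly the convexity inequality $g(\alpha z)-g(0)\le \alpha\,(g(z)-g(0))$ used there. The only cosmetic difference is that the paper first completes the square to split off $\|\nabla f(x)\|^2$ and phrases the monotonicity through an auxiliary residual function $\mathcal{R}_g$, a step your direct treatment of the maximum renders unnecessary.
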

We'll prove Lemma~\ref{lem:1} as a corollary of a related result. We first restate the definition
\be 
\label{prox0}
\mathcal{D}_g(x,\lambda) = -2\lambda \min_y \left[ \langle \nabla f(x) , y-x \rangle + \frac{\lambda}{2}||y-x||^2 + g(y) - g(x) \right],
\ee
and we note that we require $\lambda > 0$. By completing the square, we have
\begin{align*}
\label{prox1}
\mathcal{D}_g(x,\lambda) & = - \min_y \left[-\norm{\nabla f(x)}^2 + \norm{\nabla f(x)}^2 + 2\lambda\langle \nabla f(x) , y-x \rangle + \lambda^2||y-x||^2 + 2\lambda(g(y) - g(x) )\right]\\
& = ||\nabla f(x) || ^2 - \min_y \left[ || \lambda(y -x ) + \nabla f(x) ||^2 + 2\lambda (g(y) - g(x) )\right].
\end{align*}
Notice that if $g = 0$, then $\mathcal{D}_g(x,\lambda) = ||\nabla f(x) || ^2 $ and the proximal-PL inequality reduces to the PL inequality. We'll the define the \emph{proximal residual} function as the second part of the above equality,
\be
\label{Res}
\mathcal{R}_g(\lambda, x,a) \triangleq  \min_y \left[ || \lambda(y -x ) + a ||^2 + 2\lambda (g(y) - g(x) \right].
\ee

\begin{lemma}\label{lem:2} If $g$ is convex then for any $x$ and $a$, and for $0 < \lambda_1 \leq \lambda_2$ we have
\be
\mathcal{R}_g(\lambda_1, x,a) \geq  \mathcal{R}_g(\lambda_2, x,a).
\ee
\end{lemma}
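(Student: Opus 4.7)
My plan is to reduce the claim to a pointwise comparison of the two inner objectives after a convenient change of variable, and then extract the pointwise inequality directly from the convexity of $g$.

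First, I would reparameterize the minimization in~\eqref{Res} by substituting $u = \lambda(y-x)$, so that $y = x + u/\lambda$. Since $\lambda > 0$, this is a bijection over $y \in \mathbb{R}^d$, and it rewrites
\[
\mathcal{R}_g(\lambda, x, a) = \min_{u \in \mathbb{R}^d} \Big[\, \|u + a\|^2 + 2\lambda\bigl(g(x + u/\lambda) - g(x)\bigr)\,\Big].
\]
The term $\|u+a\|^2$ no longer depends on $\lambda$, so only the second term needs to be analyzed.

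The core step is then to show the pointwise inequality
\[
2\lambda_1\bigl(g(x + u/\lambda_1) - g(x)\bigr) \;\geq\; 2\lambda_2\bigl(g(x + u/\lambda_2) - g(x)\bigr) \qquad \text{for every } u,
\]
whenever $0 < \lambda_1 \leq \lambda_2$. This is where convexity of $g$ enters: because $\lambda_1/\lambda_2 \in (0,1]$, I can write the point $x + u/\lambda_2$ as a convex combination,
\[
x + \frac{u}{\lambda_2} = \frac{\lambda_1}{\lambda_2}\Bigl(x + \frac{u}{\lambda_1}\Bigr) + \Bigl(1 - \frac{\lambda_1}{\lambda_2}\Bigr) x,
\]
and apply convexity of $g$ to obtain
\[
g\Bigl(x + \frac{u}{\lambda_2}\Bigr) \;\leq\; \frac{\lambda_1}{\lambda_2}\, g\Bigl(x + \frac{u}{\lambda_1}\Bigr) + \Bigl(1 - \frac{\lambda_1}{\lambda_2}\Bigr) g(x).
\]
Rearranging and multiplying through by $2\lambda_2 > 0$ yields exactly the desired pointwise inequality between the two $\lambda$-dependent terms.

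Finally, since the two inner objectives in the reparameterized form of $\mathcal{R}_g(\lambda_1,x,a)$ and $\mathcal{R}_g(\lambda_2,x,a)$ differ only by this term, the pointwise inequality transfers directly to the minima: the objective for $\lambda_1$ dominates the objective for $\lambda_2$ at every $u$, hence its minimum dominates as well, giving $\mathcal{R}_g(\lambda_1,x,a) \geq \mathcal{R}_g(\lambda_2,x,a)$. I do not anticipate a real obstacle here; the only subtlety is ensuring the substitution $u = \lambda(y-x)$ is legitimate (which it is, since $\lambda > 0$ is assumed in the definition of $\mathcal{R}_g$), and correctly orienting the convex-combination identity so that $x + u/\lambda_2$ is written as a combination of $x + u/\lambda_1$ and $x$ rather than the other way around. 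As noted in the paper, Lemma~\ref{lem:1} then follows immediately by applying this with $a = \nabla f(x)$ and using the identity $\mathcal{D}_g(x,\lambda) = \|\nabla f(x)\|^2 - \mathcal{R}_g(\lambda, x, \nabla f(x))$, in which the $\lambda$-independent term $\|\nabla f(x)\|^2$ cancels and the minus sign flips the inequality into the direction claimed.
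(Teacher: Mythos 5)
Your proof plan is correct and follows essentially the same route as the paper's proof: your substitution $u = \lambda(y-x)$ is the paper's change of variables $\bar{y} = \lambda y$ (the paper merely takes $x=0$ without loss of generality first), and your convex-combination step writing $x + u/\lambda_2$ in terms of $x + u/\lambda_1$ and $x$ is exactly the paper's use of $g(\alpha z) - g(0) \leq \alpha\bigl(g(z) - g(0)\bigr)$ with $\alpha = \lambda_1/\lambda_2$ and $z = \bar{y}/\lambda_1$, followed by adding the quadratic term and minimizing. The concluding remark on how Lemma~\ref{lem:1} follows also matches the paper's Corollary~\ref{cor:1}.
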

\begin{proof}
Without loss of generality, assume $x = 0$. Then we have
\bea
 \mathcal{R}_g(\lambda ,a) & = &  \min_y \left[ || \lambda y  + a ||^2 + 2\lambda (g(y) - g(0) \right] \nonumber \\
  \label{eq:29}
  & = & \min_{\bar{y}} \left[ || \bar{y}  + a ||^2 + 2\lambda (g(\bar{y}/\lambda) - g(0) \right],
\eea
where in the second line we used a changed of variables $\bar{y} = \lambda y$ (note that we are minimizing over the whole space of $\R^n$). By the convexity of $g$, for any $\alpha \in [0,1]$ and $z\in\R^n$ we have
\begin{align}
\label{cx2}
g( \alpha z ) &\leq \alpha g(z) + (1-\alpha) g(0) \nonumber \\
\iff \quad g(\alpha z)  - g(0) &\leq \alpha (g(z) - g(0)).
\end{align}
By using $0 < \lambda_1/\lambda_2 \leq 1$ and using the choices $\alpha =\frac{\lambda_1 }{ \lambda_2}$ and $z = {\bar{y} / \lambda_1}$ we have
\begin{align}
 g(\bar{y}/\lambda_2) - g(0) &\leq \frac{\lambda_1 }{ \lambda_2}(g(\bar{y}/\lambda_1) - g(0)) \nonumber \\
\iff \quad  \lambda_2(g(\bar{y}/\lambda_2) - g(0)) &\leq {\lambda_1}(g(\bar{y}/\lambda_1) - g(0)),
\end{align}
Adding $||\bar{y} + a||^2$ to both sides, we get
\be
 ||\bar{y} + a||^2 +  \lambda_2(g(\bar{y}/\lambda_2) - g(0)) \leq||\bar{y} + a||^2 + {\lambda_1}(g(\bar{y}/\lambda_1) - g(0)).
\ee
Taking the minimum over both sides with respect to $\bar{y}$ yields Lemma~\ref{lem:2} due to~\eqref{eq:29}. \qed
\end{proof}

\begin{corollary}\label{cor:1}
 For any differentiable function $f$ and convex function $g$, given $\lambda_1 \leq \lambda_2$, we have
\be
\mathcal{D}_g(x,\lambda_2) \geq \mathcal{D}_g(x,\lambda_1).
\ee
\end{corollary}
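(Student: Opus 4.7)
\textbf{Proof proposal for Corollary~\ref{cor:1}.} The plan is to reduce the statement to Lemma~\ref{lem:2} via the rewriting of $\mathcal{D}_g$ that precedes the lemma. Specifically, completing the square inside the definition~\eqref{prox0} gives the identity
\[
\mathcal{D}_g(x,\lambda) \;=\; \|\nabla f(x)\|^2 \;-\; \mathcal{R}_g(\lambda, x, \nabla f(x)),
\]
which is already established in the text leading up to~\eqref{Res}. Since the first term $\|\nabla f(x)\|^2$ does not depend on $\lambda$, monotonicity of $\mathcal{D}_g(x,\cdot)$ is equivalent to \emph{reverse} monotonicity of $\mathcal{R}_g(\cdot, x, \nabla f(x))$ in its first argument.

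Given this reduction, the proof is essentially a one-liner. I would instantiate Lemma~\ref{lem:2} at the point $a = \nabla f(x)$ with the given $\lambda_1 \le \lambda_2$, obtaining
\[
\mathcal{R}_g(\lambda_1, x, \nabla f(x)) \;\ge\; \mathcal{R}_g(\lambda_2, x, \nabla f(x)),
\]
and then subtract both sides from $\|\nabla f(x)\|^2$ to conclude
\[
\mathcal{D}_g(x,\lambda_2) \;=\; \|\nabla f(x)\|^2 - \mathcal{R}_g(\lambda_2, x, \nabla f(x)) \;\ge\; \|\nabla f(x)\|^2 - \mathcal{R}_g(\lambda_1, x, \nabla f(x)) \;=\; \mathcal{D}_g(x,\lambda_1),
\]
which is exactly the desired inequality. \qed

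There is essentially no obstacle here: all the real work — the change of variables $\bar y = \lambda y$ and the convexity estimate $g(\alpha z) - g(0) \le \alpha(g(z) - g(0))$ that drives the monotonicity of $\mathcal{R}_g$ — is already absorbed into Lemma~\ref{lem:2}. The only thing to be careful about is that the completion-of-squares rewriting of $\mathcal{D}_g$ requires $\lambda > 0$, so one should remark at the outset that $\lambda_1, \lambda_2 > 0$ (which is built into the definition of $\mathcal{D}_g$). It is also worth noting that this corollary is exactly the statement of Lemma~\ref{lem:1}, so the same argument simultaneously proves that earlier lemma; in the write-up I would point this out explicitly rather than duplicate the reasoning.
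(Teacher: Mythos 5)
Your proposal is correct and follows exactly the paper's route: the paper likewise derives Corollary~\ref{cor:1} from Lemma~\ref{lem:2} via the completed-square identity $\mathcal{D}_g(x,\lambda) = \|\nabla f(x)\|^2 - \mathcal{R}_g(\lambda, x, \nabla f(x))$ with $a = \nabla f(x)$, and notes that the corollary is exactly Lemma~\ref{lem:1}. Your added remark about requiring $\lambda_1, \lambda_2 > 0$ is a sensible small clarification but does not change the argument.
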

By using $\mathcal{D}_g(x,\lambda) = ||\nabla f(x) || ^2 - \mathcal{R}_g(\lambda, x,\nabla f(x))$, Corollary~\ref{cor:1} is exactly Lemma~\ref{lem:1}.
\\ \\

\section{Relevant Problems}\label{app:probs}
In this section we prove that the three classes of functions listed in Section~\ref{subsec:probs} satisfy the proximal-PL inequality condition. Note that while we prove these hold for $\mathcal{D}_g(x,\lambda)$ for $\lambda \leq L$, by Lemma~\ref{lem:1} above they also hold for $\mathcal{D}_g(x,L)$.

\begin{enumerate}
\item {\em $f(x)$, where $f$ satisfies the PL inequality ($g$ is constant)}: \\
	As $g$ is assumed to be constant, we have $g(y) - g(x) = 0$ and the left-hand side of the proximal-PL inequality simplifies to
	\begin{align*}
		\mathcal{D}_g(x,\mu) 
		& = -2 \mu \min_y \left \{ \langle \nabla f(x), y- x \rangle + \frac{\mu}{2} \| y - x \|^2 \right \} \\
		& = -2\mu \left(- \frac{1}{2\mu}\norm{f(x)}^2\right)\\
		&= \| \nabla f(x) \|^2,
	\end{align*}
	Thus, the proximal PL inequality simplifies to $f$ satisfying the PL inequality,
	\[
		\frac{1}{2} \| \nabla f(x) \|^2 \ge \mu \left (f(x) - f^* \right),
	\]
	as we assumed.
	
\item {\em $F(x) = f(x)  + g(x)$ and $f$ is strongly convex}:\\
By the strong convexity of $f$ we have
\be
f(y) \geq f(x) + \langle \nabla f(x), y -x\rangle + \frac{\mu }{ 2} || y-x ||^2,
\ee
which leads to
\be 
F(y) \geq F(x) + \langle \nabla f(x), y -x\rangle + \frac{\mu }{ 2} || y-x ||^2  + g(y) - g(x).
\ee
Minimizing both sides respect to $y$, 
\bea 
F^* & \geq & F(x) + \min_{y} \langle \nabla f(x), y -x\rangle + \frac{\mu }{ 2} || y-x ||^2  + g(y) - g(x) \nonumber \\
& = & F(x) - \frac{1 }{ 2\mu}\mathcal{D}_g(x,\mu).
\eea
Rearranging, we have our result.

\item {\em $F(x) = f(Ax)  + g(x)$ and $f$ is strongly convex, $g$ is the indicator function for a polyhedral set $\mathcal{X}$, and $A$ is a linear transformation}: \\
By defining $\tilde{f}(x)  = f(Ax) $ and using strong convexity of $f$, we have
\be
\tilde{f}(y) \geq \tilde{f}(x) +  \langle \nabla \tilde{f}(x), y -x\rangle + \frac{\mu }{ 2} || A( y-x) ||^2,
\ee
which leads to
\be 
F(y) \geq F(x) + \langle \nabla \tilde{f}(x), y -x\rangle + \frac{\mu }{ 2} || A(y-x) ||^2  + g(y) - g(x).
\ee
Since $\mathcal{X}$ is polyhedral, it can be written as a set  $\{x : Bx \le c \}$ for a matrix $B$ and a vector $c$. As before, assume that $x_p$ is the projection of $x$ onto the optimal solution set $\mathcal{X}^*$ which in this case is $\{x : Bx \le c, Ax = z\}$ for some $z$.
\bea
F^*  & = & F(x_p) \geq F(x) + \langle \nabla \tilde{f}(x), x_p -x\rangle + \frac{\mu }{ 2} || A(x-x_p) ||^2  + g(x_p) - g(x) \nonumber \\
& = & F(x) + \langle \nabla \tilde{f}(x), x_p -x\rangle + \frac{\mu }{ 2} || Ax - z ||^2  + g(x_p) - g(x) \nonumber \\
& =  & F(x) + \langle \nabla \tilde{f}(x), x_p -x\rangle + \frac{\mu }{ 2} || \{ Ax - z \}_+ + \{-Ax + z\}_+ ||^2  + g(x_p) - g(x) \nonumber \\
& =  & F(x) + \langle \nabla \tilde{f}(x), x_p -x\rangle + \frac{\mu }{ 2} \left  \| \left \{  \left [ \begin{matrix} A\\ -A\\ B \end{matrix} \right ] x - \left [ \begin{matrix} z \\ -z \\ c \end{matrix} \right ] \right \}_+ \right \|^2  + g(x_p) - g(x) \nonumber \\
& \geq &  F(x) + \langle \nabla \tilde{f}(x), x_p -x\rangle + \frac{\mu \theta (A,B) }{ 2} || x - x_p ||^2  + g(x_p) - g(x) \nonumber \\
& \geq & F(x) + \min_y \left[ \langle \nabla \tilde{f}(x), y -x\rangle + \frac{\mu \theta (A,B) }{ 2} || y-x ||^2  + g(y) - g(x) \right] \nonumber \\
& = & F(x) - \frac{1 }{ 2 \mu \ \theta(A)} \mathcal{D}_g(x,\mu \theta(A,B)).
\eea
where we've used the notation that $\{\cdot\}_+ = \max\{0,\cdot\}$, the fourth equality follows because $x$ was projected onto~$\mathcal{X}$ in the previous iteration (so $Bx -c \leq 0$), and the line after that uses Hoffman's bound~\citep{hoffman}.

\item {\em $F(x) = f(x)  + g(x)$, $f$ is convex, and $F$ satisfies the quadratic growth (QG) condition}: \\
A function $F$ satisfies the QG condition if
\be
F(x) - F^* \geq \frac{\mu }{ 2} || x - x_p||^2.
\ee
For any $\lambda > 0$ we have,
\bea
&&\min_y \left[ \langle \nabla {f}(x), y -x\rangle + \frac{\lambda }{ 2} || y-x ||^2  + g(y) - g(x) \right] \nonumber \\
& \leq & \langle \nabla {f}(x), x_p -x \rangle + \frac{\lambda }{ 2} || x_p-x ||^2  + g(x_p) - g(x) \nonumber \\
& \leq & f(x_p) - f(x) + \frac{\lambda }{ 2} || x_p-x ||^2  + g(x_p) - g(x) \nonumber \\
& = & \frac{\lambda }{ 2} || x_p-x ||^2 + F^* - F(x) \nonumber \\
& \leq & \left (1 - \frac{\lambda }{ \mu} \right) (F^* -F).
\eea 
The third line follows from the convexity of $f$, and the last inequality uses the QG condition of $F$. Multiplying both sides by $-2\lambda$, we have
\be
\mathcal{D}_g(x,\lambda) = -2\lambda \min_y \left[ \langle \nabla \tilde{f}(x), y -x\rangle + \frac{\lambda }{ 2} || y-x ||^2  + g(y) - g(x) \right] \geq 2\lambda \left (1 - \frac{\lambda }{ \mu} \right ) (F(x) - F^*).
\ee
This is true for any $\lambda >0$, and by choosing $\lambda = \mu/2$ we have
\be
\mathcal{D}_g(x,\mu/2) \geq \frac{\mu }{ 2} (F(x) - F^*).
\ee

\item {\em $F$ satisfies the KL inequality or the proximal-EB inequality}:\\ In the next section we show that these are equivalent to the proximal-PL inequality.
\end{enumerate}

\section{Equivalence of Proximal-PL with KL and EB}
\label{app:proximalPLequiv}

The equivalence of the KL condition and the proximal-gradient variant of the Luo-Tseng EB condition is known for convex $f$, see~\cite[Corollary~3.6]{drusvyatskiy2016error} and the proof of \cite[Theorem~5]{Bolte2015}. Here we prove the equivalence of these conditions with the proximal-PL inequality for non-convex $f$. First we review the definitions of the three conditions:
\begin{enumerate}
\item \textbf{Proximal-PL}: There exists a $\mu > 0$ such that
\[
    \frac12 {\cal D}_{g}(x,L) \ge \mu (F(x) - F_*)
\]
where
\[
    {\cal D}_{g}(x,L) = -2L \min_y \left\{\langle\nabla f(x),y-x\rangle + \frac{L}2\|y - x\|^2 + g(y) - g(x)\right\}.
\]
\item \textbf{Proximal-EB}: There exists $c > 0$ such that we have
  \begin{equation}\label{gLT}
  \norm{x - x_p} \le c \left\|x - {\rm prox}_{\frac1L g}\left(x - \frac1L \nabla f(x)\right)\right\|.
  \end{equation}
\item \textbf{Kurdyka-{\L}ojasiewicz}: The KL condition with exponent ${1 \over 2}$ holds if there exist $\tilde{\mu} > 0$ such that
\be\label{KL}
\min_{s \in \partial F(x)} \norm{s}^2 \geq 2 \tilde{\mu} (F(x) - F_*)
\ee  
where $\partial F(x)$ is the Frechet subdifferential. In particular, if $F : H \rightarrow \mathcal{R}$ is a real-valued function then we say that $s \in H$ is a Frechet subdifferential of $F$ at  $x \in \hbox{dom} \ F$ if 
\be
\liminf_{y\rightarrow x,y\neq x} {F(y) - F(x) - \langle s, y-x \rangle \over \|y-x\|^2} \geq 0.
\ee
Note that for differentiable $f$ the Frechet subdifferential only contains the gradient, $\nabla f(x)$. In our case where $F(x) = f(x) + g(x)$ with a differentiable $f$ and a convex $g$ we have
\[ \partial F(x) = \{\nabla f(x) + \xi \; | \; \xi \in \partial g(x)\}.\]
The KL inequality is an intuitive generalization of the PL inequality since, analogous to the gradient vector in the smooth case, the negation of the quantity $\argmin{s\in \partial F(x)}\norm{s}$ points in the direction of steepest descent~\citep[see][Section 8.4]{bertsekas2003convex}
\end{enumerate}

\noindent We first derive an alternative representation of ${\cal D}_g(x,L)$ in terms of the so-called forward-backward envelope $F_{\frac1L}$ of $F$~\citep[see][Definition~2.1]{stella2016forward}.
Indeed,
\begin{equation}\label{DgxL}
  \begin{split}
    {\cal D}_{g}(x,L) &= -2L \min_y \left\{\langle\nabla f(x),y-x\rangle + \frac{L}2\|y - x\|^2 + g(y) - g(x)\right\}\\
    &= -2L \left[\min_y \left\{f(x) + \langle\nabla f(x),y-x\rangle + \frac{L}2\|y - x\|^2 + g(y)\right\} - f(x) - g(x)\right]\\
    &= -2L [F_{\frac1L}(x) - F(x)] = 2L [F(x) - F_{\frac1L}(x)],
  \end{split}
\end{equation}
It follows from the  definition of $F_{\frac1L}(x) $ that we have
\bea
\label{bound_on_F_L}
 F_{\frac1L}(x) - F^* & =& \min_y \left\{f(x) + \langle\nabla f(x),y-x\rangle + \frac{L}2\|y - x\|^2 + g(y)\right\} - f(x^*) - g(x^*) \nonumber \\
 & \leq & f(x) + \langle\nabla f(x),x^*-x\rangle + \frac{L}2\| x^* - x\|^2 + g(x^*) - f(x^*) - g(x^*) \nonumber \\
 & =  &f(x) - f(x^*) + \langle\nabla f(x),x^*-x\rangle + \frac{L}2\| x^* - x\|^2 \nonumber \\
 & =  &f(x) - f(x^*) + \langle\nabla f(x),x^*-x\rangle + \frac{L}2\| x^* - x\|^2 \nonumber \\
 & \leq & 2L\| x^* - x\|^2,
\eea
where the second line uses that we are taking the minimizer and the last line uses the Lipschitz continuity of $\nabla f$ as follows,
\begin{equation}
\label{eq:LipschitzWeird}
\begin{aligned}
f(x) - f(y) + \langle \nabla f(x), y -x \rangle & \leq \langle \nabla f(y),x-y\rangle + \frac{L}{2}\norm{y-x}^2 + \langle \nabla f(x),y-x\rangle\\
& = \langle \nabla f(y) - \nabla f(x), y-x\rangle + \frac{L}{2} \norm{y-x}^2\\
& \leq \norm{\nabla f(y) - \nabla f(x)}\norm{y-x} + \frac{L}{2} \norm{y-x}^2 \leq \frac{3L}{2}\norm{y-x}^2.
\end{aligned}
\end{equation}

\begin{itemize}
\item \textbf{Proximal-EB $\rightarrow$ proximal-PL}: we have that
\bea
F(x) - F^*  &=& F(x) - F_{\frac1L}(x) + F_{\frac1L}(x) - F^* \nonumber \\
&\leq & F(x) - F_{\frac1L}(x) + 2L \| x^* - x\|^2 \nonumber \\
&\leq & F(x) - F_{\frac1L}(x) + C_0 \left\|x - {\rm prox}_{\frac1L g}\left(x - \frac1L \nabla f(x)\right)\right\|^2 \nonumber \\
&\leq &  C_1 (F(x) - F_{\frac1L}(x)),
\eea
for some constants $C_0$ and $C_1$, 
where the second inequality uses the proximal-EB and the last inequality follows from~\citet[Proposition~2.2(i)]{stella2016forward}. Now by using the fact that $F(x) - F_{\frac1L}(x) = {1 \over 2L}{\cal D}_{g}(x,L) $, the function satisfies the proximal-PL inequality.

\item \textbf{Proximal-PL $\rightarrow$ KL}: It's sufficient to prove that ${\cal D}_{g}(x,\mu) \leq \min_{s \in \partial F(x)} \norm{s}^2$ for any $x$ and $\mu$. First we observe that for any subgradient $\xi \in \partial g(x)$ we have
\bea
\langle\nabla f(x),y-x\rangle + \frac{\mu}2\|y - x\|^2 + g(y) - g(x) &\geq& \\ \nonumber
  \langle\nabla f(x),y-x\rangle + \frac{\mu}2\|y - x\|^2 + \langle \xi , y-x\rangle  &=& \\ \nonumber
  \langle\nabla f(x) + \xi,y-x\rangle + \frac{\mu}2\|y - x\|^2,
\eea
where the inequality follows from the definition of a subgradient.  Now by minimizing both sides over $y$ we have
\bea
\min_y \left\{\langle\nabla f(x),y-x\rangle + \frac{\mu}2\|y - x\|^2 + g(y) - g(x) \right\} \geq \\ \nonumber \min_y \left\{ \langle\nabla f(x) + \xi,y-x\rangle + \frac{\mu}2\|y - x\|^2 \right\} \geq 
-{1 \over 2\mu} \|\nabla f(x) + \xi \|^2.
\eea
Multiplying both sides with $-2\mu$ we get
\be
  {\cal D}_{g}(x,\mu) \leq \|\nabla f(x) + \xi \|^2
\ee
Since this holds for any $\xi \in \partial g(x)$, then it holds for any $\zeta = \nabla f(x) + \xi \in \partial F(x)$, it also holds for the minimium-norm subgradient of $F$.
\item \textbf{KL $\rightarrow$ Proximal-EB}: A function $h(x)$ is called ``semiconvex'' if there exist an $\alpha > 0$ such that $h(x) + \alpha \| x\|^2$ is convex~\citep[see][Definition 10]{bolte2010characterizations}. Note that Lipschitz-continuity of $\nabla f$ implies semi-convexity of $f$ in light of~\eqref{eq:LipschitzWeird} and~\citet[][Remark 11(iii)]{bolte2010characterizations}. It follows from convexity of $g$ that $F$ is semi-convex.
From~\citet[][Theorem 13]{bolte2010characterizations}, for any $x \in \hbox{dom} F$ there exist a subgradient curve $\chi_x : [0, \infty] \rightarrow \hbox{dom} F $ that satisfies
 \bea
 \label{subG}
 \dot{\chi_x}(t) & \in & -\partial F(\chi_x(t)) \nonumber \\
 \chi_x(0) &=& x \nonumber \\
 { d \over dt} F(\chi_x(t)) &=& - \|  \dot{\chi_x}(t) \|^2,
 \eea
 where $F(\chi_x(t))$ is non-increasing and Lipschitz continuous on $[\eta, \infty]$ for any $\eta >0$. By using these facts let's define the function $r(t) = \sqrt{F(\chi_x(t)) - F^*}$. It is easy to see that
\bea 
 {dr(t) \over dt} &=& {\dot{F}(\chi_x(t)) \over 2 \sqrt{F(\chi_x(t)) - F^*} } \nonumber \\
 &=& -{ \|  \dot{\chi_x}(t) \|^2 \over 2 \sqrt{F(\chi_x(t)) - F^*} } \nonumber \\
 &\leq& - \sqrt{\tilde{\mu}/2} \|  \dot{\chi_x}(t) \|,
 \eea
 for the second line we used the definition of subgradient curve and for the third line we used KL inequality condition and the fact that $\dot{\chi_x}(t)  \in  -\partial F(\chi_x(t))$. \\
 Now we have
 \bea
 \label{integral_1}
 r(T) - r(0) &=& \int_0^T {d \over dt}r(t) dt \nonumber \\
 &\leq & - \sqrt{\tilde{\mu}/2}\int_0^T  \|  \dot{\chi_x}(t) \| dt \nonumber \\
 &=& -\sqrt{\tilde{\mu}/2}{\rm dist} (\chi_x(T),\chi_x(0)),
 \eea
where we used the bound on the derivative of $r(t)$ above and that the length of the curve connecting any two points is less than the Euclidean distance between them. We're now going to take the limit of $T \rightarrow \infty$, while using the facts that $r(\infty) = 0$ (which we prove below) and using $r(0) = \sqrt{F(x) - F^*}$. This gives
\be
\sqrt{F(x) - F^*} \geq  \sqrt{\tilde{\mu}/2} \hbox{dist}(x, \mathcal{X}) 
\ee
From this inequality and also KL condition \ref{KL}, we proved that there exist a $C >0$ such that 
\be 
\label{derivative_bound}
{\rm dist}(0,\partial F(x)) \geq C {\rm dist}(x, \mathcal{X}).
\ee

Now let's show that $r(\infty) = 0$ or $\chi_x(\infty) \in \mathcal{X}$. From equation \ref{integral_1} we have
\bea
\label{int_ineq}
  r(T) - r(0) &=& \int_0^T {d \over dt}r(t) dt \nonumber \\
  &=& - \int_0^T { \|  \dot{\chi_x}(t) \|^2 \over 2 \sqrt{F(\chi_x(t)) - F^*} } \nonumber \\
  &\leq& -{\tilde{\mu} \over 2} \int_0^T  \sqrt{F(\chi_x(t)) - F^*} \nonumber \\
  &\leq& -{\tilde{\mu} \over 2} \int_0^T  \sqrt{F(\chi_x(T)) - F^*} \nonumber \\
  &=& -{\tilde{\mu}\sqrt{F(\chi_x(T)) - F^*} \over 2} T = -{\tilde{\mu} \: T \: r(T) \over 2}
\eea
where for the first inequality we used the KL property, and for the second inequality we used the fact that $F(\chi_x(t))$ is non-increasing, which means $F(\chi_x(T)) \leq F(\chi_x(t))$ for any $t \in [0,T]$. This inequality gives a bound on $r(T)$,
\[0 \leq r(T) \leq {2r(0) \over 2+\tilde{\mu}T }\]
now by taking the limit of $T \rightarrow \infty$ , we get $r(T) \rightarrow 0$. \\

Now by using \ref{derivative_bound} we can show that the proximal-EB condition is satisfied. Let's define $\hat{x} = {\rm prox}_{\frac1L g}\left(x - \frac1L \nabla f(x)\right) $. From the optimality of $\hat{x}$ we have $-\nabla f(x) - L(\hat{x} - x) \in \partial g(\hat{x})$, using this we get 
\[\nabla f(\hat{x})-\nabla f(x) - L(\hat{x} - x) \in \partial g(\hat{x}) + \nabla f(\hat{x}) = \partial F(\hat{x}).\]
Denoting the particular subgradient of $g$ that achieves this by $\xi$, we have
\bea
\label{partial_bound}
{\rm dist}(0, \partial F(\hat{x}))  &\leq& \|0 - \xi\| \nonumber\\
&=&\| \nabla f(\hat{x})-\nabla f(x) - L(\hat{x} - x)\| \nonumber \\
&\leq& L \|(\hat{x} - x)\| +  \| \nabla f(\hat{x})-\nabla f(x)\|\nonumber \\
&\leq&  2L \|(\hat{x} - x)\|,
\eea
where the second inequality uses $\| a -b\| \leq \|a\| + \|b\|$, and for the last line we used Lipschitz continuity of of $\nabla f$. We finally get the proximal-EB condition using
\bea 
{\rm dist}(x,\mathcal{X}) &\leq & \| x - \hat{x}\| + {\rm dist}(\hat{x},\mathcal{X}) \nonumber \\
&\leq & \| x - \hat{x}\|  + C {\rm dist}(0, \partial F(\hat{x}) \nonumber \\
&\leq & \| x - \hat{x}\|  + 2CL \|(\hat{x} - x)\| \nonumber \\
&=& (1 + 2CL) \| x - \hat{x}\|, 
\eea
where the first inequality follows from the triangle inequality,  the second line uses \ref{derivative_bound}, and the third inequality uses \ref{partial_bound}.
\end{itemize}

\section{Proximal Coordinate Descent}\label{app:proxCD}
Here we show linear convergence of randomized coordinate descent for $F(x) = f(x) + g(x)$ assuming that $F$ satisfies the proximal PL inequality, $\nabla f$ is coordinate-wise Lipschitz continuous, and $g$ is a separable convex function ($g(x) = \sum_i g_i(x_i)$). 

From coordinate-wise Lipschitz continuity of $\nabla f$ and separability of $g$, we have
\be
\label{coorLip}
F(x + y_i e_i) - F(x) \leq y_i \nabla_i f(x) + \frac{L }{ 2} y_i^2 + g_i( x_i + y_i) - g(x_i).
\ee
Given a coordinate $i$ the coordinate descent step chooses $y_i$ to minimize this upper bound on the improvement in $F$,
\[ y_i = \argmin{t_i} \left\{ t_i \nabla_i f(x) + \frac{L }{ 2} t_i^2 + g_i( x_i + t_i) - g(x_i). \right\}\]
We next  use an argument similar to~\citet{richtarikTakac} to relate the expected improvement (with random selection of the coordinates) to the function $\mathcal{D}_g$,
\begin{align*}
 \mathbb{E} \left\{  \min_{t_i}  t_i \nabla_i f(x) + \frac{L }{ 2} t_i^2 + g_i( x_i + t_i) - g_i(x_i) \right\}
 & =  \frac{1 }{ n}\sum_i   \min_{t_i}  t_i \nabla_i f(x) + \frac{L }{ 2} t_i^2 + g_i( x_i + t_i) - g_i(x_i) \nonumber \\
 & =  \frac{1 }{ n} \min_{t_1, \cdots, t_n} \sum_i  t_i \nabla_i f(x) + \frac{L }{ 2} t_i^2 + g_i( x_i + t_i) - g_i(x_i) \nonumber \\
 & =  \frac{1 }{ n} \min_{y \equiv x + (t_1,\cdots, t_n)} \langle \nabla f(x) , y  - x\rangle + \frac{L }{ 2} || y-x||^2 + g(y) - g(x) \nonumber \\
 & =  -\frac{1 }{ 2Ln} \mathcal{D}_g(L, x).
\end{align*}
(Note that separability allows us to exchange the summation and minimization operators.) 
By using this and taking the expectation of~\eqref{coorLip} we get 
\be
\mathbb{E} \left [ F(x_{k+1}) \right ]  \leq  F(x_k)  -\frac{1 }{ 2Ln} \mathcal{D}_g(L, x).
\ee
Subtracting $F^*$ from both sides and applying the proximal-PL inequality yields a linear convergence rate of $\left ( 1 - \frac{\mu }{ nL} \right )$.

\section*{References}
\vspace{-2em}
\renewcommand{\bibname}{}
\bibliographystyle{abbrvnat}
\bibliography{bib}

\begin{thebibliography}{57}
\providecommand{\natexlab}[1]{#1}
\providecommand{\url}[1]{\texttt{#1}}
\expandafter\ifx\csname urlstyle\endcsname\relax
  \providecommand{\doi}[1]{doi: #1}\else
  \providecommand{\doi}{doi: \begingroup \urlstyle{rm}\Url}\fi

\bibitem[Agarwal et~al.(2012)Agarwal, Negahban, and Wainwright]{agarwal}
A.~Agarwal, S.~N. Negahban, and M.~J. Wainwright.
\newblock Fast global convergence rates of gradient methods for
  high-dimensional statistical recovery.
\newblock \emph{Ann. Statist.}, pages 2452--2482, 2012.

\bibitem[Anitescu(2000)]{anitescu2000}
M.~Anitescu.
\newblock Degenerate nonlinear programming with a quadratic growth condition.
\newblock \emph{SIAM J. Optim.}, pages 1116--1135, 2000.

\bibitem[Attouch and Bolte(2009)]{AB}
H.~Attouch and J.~Bolte.
\newblock On the convergence of the proximal algorithm for nonsmooth functions
  involving analytic features.
\newblock \emph{Math. Program., Ser. B}, pages 5--16, 2009.

\bibitem[Attouch et~al.(2013)Attouch, Bolte, and
  Svaiter]{attouch2013convergence}
H.~Attouch, J.~Bolte, and B.~F. Svaiter.
\newblock Convergence of descent methods for semi-algebraic and tame problems:
  proximal algorithms, forward--backward splitting, and regularized
  gauss--seidel methods.
\newblock \emph{Mathematical Programming}, 137\penalty0 (1-2):\penalty0
  91--129, 2013.

\bibitem[Bach and Moulines(2013)]{bachMoulines2013}
F.~Bach and E.~Moulines.
\newblock Non-strongly-convex smooth stochastic approximation with convergence
  rate ${O}(1/n)$.
\newblock \emph{NIPS}, pages 773--791, 2013.

\bibitem[Beck and Teboulle(2009)]{beck2009fast}
A.~Beck and M.~Teboulle.
\newblock A fast iterative shrinkage-thresholding algorithm for linear inverse
  problems.
\newblock \emph{SIAM J. Imaging Sci.}, 2\penalty0 (1):\penalty0 183--202, 2009.

\bibitem[Bertsekas et~al.(2003)Bertsekas, Nedic, and
  Ozdaglar]{bertsekas2003convex}
D.~Bertsekas, A.~Nedic, and A.~Ozdaglar.
\newblock \emph{{Convex Analysis and Optimization}}.
\newblock Athena Scientific, 2003.

\bibitem[Bolte et~al.(2008)Bolte, Daniilidis, Ley, and
  Mazet]{bolte2008characterization}
J.~Bolte, A.~Daniilidis, O.~Ley, and L.~Mazet.
\newblock Characterization of lojasiewicz inequalities and applications.
\newblock \emph{arXiv preprint arXiv:0802.0826}, 2008.

\bibitem[Bolte et~al.(2010)Bolte, Daniilidis, Ley, and
  Mazet]{bolte2010characterizations}
J.~Bolte, A.~Daniilidis, O.~Ley, and L.~Mazet.
\newblock Characterizations of {\l}ojasiewicz inequalities: subgradient flows,
  talweg, convexity.
\newblock \emph{Transactions of the American Mathematical Society},
  362\penalty0 (6):\penalty0 3319--3363, 2010.

\bibitem[Bolte et~al.(2015)Bolte, Nguyen, Peypouquet, and Suter]{Bolte2015}
J.~Bolte, T.~P. Nguyen, J.~Peypouquet, and B.~W. Suter.
\newblock From error bounds to the complexity of first-order descent methods
  for convex functions.
\newblock \emph{arXiv:1510.08234}, 2015.

\bibitem[Craven and Glover(1985)]{craven1985}
B.~D. Craven and B.~M. Glover.
\newblock Invex functions and duality.
\newblock \emph{J. Austral. Math. Soc. (Series A)}, pages 1--20, 1985.

\bibitem[Dinuzzo et~al.(2011)Dinuzzo, Ong, Gehler, and Pillonetto]{dinuzzo2011}
F.~Dinuzzo, C.~S. Ong, P.~Gehler, and G.~Pillonetto.
\newblock Learning output kernels with block coordinate descent.
\newblock \emph{ICML}, pages 49--56, 2011.

\bibitem[Drusvyatskiy and Lewis(2016)]{drusvyatskiy2016error}
D.~Drusvyatskiy and A.~S. Lewis.
\newblock Error bounds, quadratic growth, and linear convergence of proximal
  methods.
\newblock \emph{arXiv preprint arXiv:1602.06661}, 2016.

\bibitem[Garber and Hazan(2015{\natexlab{a}})]{garber2015}
D.~Garber and E.~Hazan.
\newblock Faster rates for the {F}rank-{W}olfe method over strongly-convex
  sets.
\newblock \emph{ICML}, pages 541--549, 2015{\natexlab{a}}.

\bibitem[Garber and Hazan(2015{\natexlab{b}})]{garber2015b}
D.~Garber and E.~Hazan.
\newblock Faster and simple {PCA} via convex optimization.
\newblock \emph{arXiv:1509.05647v4}, 2015{\natexlab{b}}.

\bibitem[Gong and Ye(2014)]{gong2014linear}
P.~Gong and J.~Ye.
\newblock Linear convergence of variance-reduced stochastic gradient without
  strong convexity.
\newblock \emph{arXiv:1406.1102}, 2014.

\bibitem[Gu et~al.(2013)Gu, Lim, and Wu]{gu}
M.~Gu, L.-H. Lim, and C.~J. Wu.
\newblock Par{N}es: A rapidly convergent algorithm for accurate recovery of
  sparse and approximately sparse signals.
\newblock \emph{Numer. Algor.}, pages 321--347, 2013.

\bibitem[Hanson(1981)]{hanson1981}
M.~A. Hanson.
\newblock On sufficiency of the {K}uhn-{T}ucker conditions.
\newblock \emph{J. Math. Anal. Appl.}, pages 545--550, 1981.

\bibitem[Hoffman(1952)]{hoffman}
A.~J. Hoffman.
\newblock On approximate solutions of systems of linear inequalities.
\newblock \emph{J. Res. Nat. Bur. Stand.}, pages 263--265, 1952.

\bibitem[Hou et~al.(2013)Hou, Zhou, So, and Luo]{houNuclear}
K.~Hou, Z.~Zhou, A.~M.-C. So, and Z.-Q. Luo.
\newblock On the linear convergence of the proximal gradient method for trace
  norm regularization.
\newblock \emph{NIPS}, pages 710--718, 2013.

\bibitem[Hush et~al.(2006)Hush, Kelly, Scovel, and Steinwart]{hush}
D.~Hush, P.~Kelly, C.~Scovel, and I.~Steinwart.
\newblock {QP} algorithms with guaranteed accuracy and run time for support
  vector machines.
\newblock \emph{J. Mach. Learn. Res.}, pages 733--769, 2006.

\bibitem[Johnson and Zhang(2013)]{johnson2013stochastic}
R.~Johnson and T.~Zhang.
\newblock Accelerating stochastic gradient descent using predictive variance
  reduction.
\newblock \emph{NIPS}, pages 315--323, 2013.

\bibitem[Kadkhodaie et~al.(2014)Kadkhodaie, Sanjabi, and Luo]{kadk2014}
M.~Kadkhodaie, M.~Sanjabi, and Z.-Q. Luo.
\newblock On the linear convergence of the approximate proximal splitting
  method for non-smooth convex optimization.
\newblock \emph{arXiv:1404.5350v1}, 2014.

\bibitem[Kurdyka(1998)]{kurdyka1998gradients}
K.~Kurdyka.
\newblock On gradients of functions definable in o-minimal structures.
\newblock In \emph{Annales de l'institut Fourier}, volume~48, pages 769--784.
  Chartres: L'Institut, 1950-, 1998.

\bibitem[{Le Roux} et~al.(2012){Le Roux}, Schmidt, and
  Bach]{roux2012stochastic}
N.~{Le Roux}, M.~Schmidt, and F.~Bach.
\newblock A stochastic gradient method with an exponential convergence rate for
  finite training sets.
\newblock \emph{NIPS}, pages 2672--2680, 2012.

\bibitem[Li and Pong(2016)]{Li2016}
G.~Li and T.~K. Pong.
\newblock Calculus of the exponent of {K}urdyka-{L}ojasiewicz inequality and
  its applications to linear convergence of first-order methods.
\newblock \emph{arXiv:1602.02915v1}, 2016.

\bibitem[Liu and Wright(2015)]{Liu_osc}
J.~Liu and S.~J. Wright.
\newblock Asynchronous stochastic coordinate descent: Parallelism and
  convergence properties.
\newblock \emph{SIAM J. Optim.}, pages 351--376, 2015.

\bibitem[Liu et~al.(2014)Liu, Wright, R{\'e}, Bittorf, and Sridhar]{Liu_esc}
J.~Liu, S.~J. Wright, C.~R{\'e}, V.~Bittorf, and S.~Sridhar.
\newblock An asynchronous parallel stochastic coordinate descent algorithm.
\newblock \emph{arXiv:1311.1873v3}, 2014.

\bibitem[\L{}ojasiewicz(1963)]{loj}
S.~\L{}ojasiewicz.
\newblock A topological property of real analytic subsets (in {F}rench).
\newblock \emph{Coll. du CNRS, Les {\'e}quations aux d{\'e}riv{\'e}es
  partielles}, pages 87--89, 1963.

\bibitem[Luo and Tseng(1993)]{Luo}
Z.-Q. Luo and P.~Tseng.
\newblock Error bounds and convergence analysis of feasible descent methods: A
  general approach.
\newblock \emph{Ann. Oper. Res.}, pages 157--178, 1993.

\bibitem[Ma et~al.(2015)Ma, Tappenden, and Tak{\'a}{\v c}]{Ma2015}
C.~Ma, T.~Tappenden, and M.~Tak{\'a}{\v c}.
\newblock Linear convergence of the randomized feasible descent method under
  the weak strong convexity assumption.
\newblock \emph{arXiv:1506.02530}, 2015.

\bibitem[Meir and R{\"a}tsch(2003)]{ratsch}
R.~Meir and G.~R{\"a}tsch.
\newblock \emph{An Introduction to Boosting and Leveraging}, pages 118--183.
\newblock Springer, Heidelberg, 2003.

\bibitem[Necoara and Clipici(2016)]{Necoara2015b}
I.~Necoara and D.~Clipici.
\newblock Parallel random coordinate descent method for composite minimization:
  Convergence analysis and error bounds.
\newblock \emph{SIAM J. Optim.}, pages 197--226, 2016.

\bibitem[Necoara et~al.(2015)Necoara, Nesterov, and Glineur]{Necoara2015}
I.~Necoara, Y.~Nesterov, and F.~Glineur.
\newblock Linear convergence of first order methods for non-strongly convex
  optimization.
\newblock \emph{arXiv:1504.06298v3}, 2015.

\bibitem[Nemirovski et~al.(2009)Nemirovski, Juditsky, Lan, and
  Shapiro]{nemirovski2009robust}
A.~Nemirovski, A.~Juditsky, G.~Lan, and A.~Shapiro.
\newblock Robust stochastic approximation approach to stochastic programming.
\newblock \emph{SIAM J. Optim.}, pages 1574--1609, 2009.

\bibitem[Nesterov(2004)]{Nes04b}
Y.~Nesterov.
\newblock \emph{Introductory Lectures on Convex Optimization: A Basic Course}.
\newblock Kluwer Academic Publishers, Dordrecht, The Netherlands, 2004.

\bibitem[Nesterov(2012)]{nestrov2012}
Y.~Nesterov.
\newblock Efficiency of coordinate descent methods on huge-scale optimization
  problems.
\newblock \emph{SIAM J. Optim.}, pages 341--362, 2012.

\bibitem[Nesterov and Polyak(2006)]{nesterov2006cubic}
Y.~Nesterov and B.~T. Polyak.
\newblock Cubic regularization of newton method and its global performance.
\newblock \emph{Mathematical Programming}, 108\penalty0 (1):\penalty0 177--205,
  2006.

\bibitem[Nutini et~al.(2015)Nutini, Schmidt, Laradji, Friedlander, and
  Koepke]{julie}
J.~Nutini, M.~Schmidt, I.~H. Laradji, M.~Friedlander, and H.~Koepke.
\newblock Coordinate descent converges faster with the {G}auss-{S}outhwell rule
  than random selection.
\newblock \emph{ICML}, pages 1632--1641, 2015.

\bibitem[Polyak(1963)]{polyak}
B.~T. Polyak.
\newblock Gradient methods for minimizing functionals (in {R}ussian).
\newblock \emph{Zh. Vychisl. Mat. Mat. Fiz.}, pages 643--653, 1963.

\bibitem[Reddi et~al.(2016{\natexlab{a}})Reddi, Hefny, Sra, Poczos, and
  Smola]{reddi2}
S.~J. Reddi, A.~Hefny, S.~Sra, B.~Poczos, and A.~Smola.
\newblock Stochastic variance reduction for nonconvex optimization.
\newblock \emph{arXiv:1603.06160}, 2016{\natexlab{a}}.

\bibitem[Reddi et~al.(2016{\natexlab{b}})Reddi, Sra, Poczos, and Smola]{reddi1}
S.~J. Reddi, S.~Sra, B.~Poczos, and A.~Smola.
\newblock Fast incremental method for nonconvex optimization.
\newblock \emph{arXiv:1603.06159}, 2016{\natexlab{b}}.

\bibitem[Reddi et~al.(2016{\natexlab{c}})Reddi, Sra, Poczos, and Smola]{reddi3}
S.~J. Reddi, S.~Sra, B.~Poczos, and A.~Smola.
\newblock Fast stochastic methods for nonsmooth nonconvex optimization.
\newblock \emph{arXiv:1605.06900}, 2016{\natexlab{c}}.

\bibitem[Richt{\'a}rik and Tak{\'a}{\v{c}}(2014)]{richtarikTakac}
P.~Richt{\'a}rik and M.~Tak{\'a}{\v{c}}.
\newblock Iteration complexity of randomized block-coordinate descent methods
  for minimizing a composite function.
\newblock \emph{Math. Program., Ser. A}, pages 1--38, 2014.

\bibitem[Riedmiller and Braun(1992)]{redmiller92}
M.~Riedmiller and H.~Braun.
\newblock {RPROP} - {A} fast adaptive learning algorithm.
\newblock \emph{In: Proc. of ISCIS VII}, 1992.

\bibitem[Schmidt et~al.(2011)Schmidt, Roux, and Bach]{schmidt2011inexact}
M.~Schmidt, N.~L. Roux, and F.~Bach.
\newblock Convergence rates of inexact proximal-gradient methods for convex
  optimization.
\newblock \emph{NIPS}, pages 1458--1466, 2011.

\bibitem[Shalev-Shwartz and Zhang(2013)]{shalev-shwartz}
S.~Shalev-Shwartz and T.~Zhang.
\newblock Stochastic dual coordinate ascent methods for regularized loss
  minimization.
\newblock \emph{J. Mach. Learn. Res.}, pages 567--599, 2013.

\bibitem[Stella et~al.(2016)Stella, Themelis, and Patrinos]{stella2016forward}
L.~Stella, A.~Themelis, and P.~Patrinos.
\newblock Forward-backward quasi-newton methods for nonsmooth optimization
  problems.
\newblock \emph{arXiv preprint arXiv:1604.08096}, 2016.

\bibitem[Tseng(2010)]{tseng2010}
P.~Tseng.
\newblock Approximation accuracy, gradient methods, and error bound for
  structured convex optimization.
\newblock \emph{Math. Program., Ser. B}, pages 263--295, 2010.

\bibitem[Tseng and Yun(2009)]{tsengYun}
P.~Tseng and S.~Yun.
\newblock Block-coordinate gradient descent method for linearly constrained
  nonsmooth separable optimization.
\newblock \emph{J. Optim. Theory Appl.}, pages 513--535, 2009.

\bibitem[Wang and Lin(2014)]{wang14}
P.-W. Wang and C.-J. Lin.
\newblock Iteration complexity of feasible descent methods for convex
  optimization.
\newblock \emph{J. Mach. Learn. Res.}, pages 1523--1548, 2014.

\bibitem[Xiao and Zhang(2013)]{xiao_zhang}
L.~Xiao and T.~Zhang.
\newblock A proximal-gradient homotopy method for the sparse least-squares
  problem.
\newblock \emph{SIAM J. Optim.}, pages 1062--1091, 2013.

\bibitem[Zhang(2015)]{Zhang2015}
H.~Zhang.
\newblock The restricted strong convexity revisited: Analysis of equivalence to
  error bound and quadratic growth.
\newblock \emph{arXiv:1511.01635}, 2015.

\bibitem[Zhang(2016)]{zhang2016characterization}
H.~Zhang.
\newblock New analysis of linear convergence of gradient-type methods via
  unifying error bound conditions.
\newblock \emph{arXiv:1606.00269v3}, 2016.

\bibitem[Zhang and Yin(2013)]{zhang-yin}
H.~Zhang and W.~Yin.
\newblock Gradient methods for convex minimization: Better rates under weaker
  conditions.
\newblock \emph{arXiv:1303.4645v2}, 2013.

\bibitem[Zhang et~al.(2013)Zhang, Jiang, and Luo]{zhang2013linear}
H.~Zhang, J.~Jiang, and Z.-Q. Luo.
\newblock On the linear convergence of a proximal gradient method for a class
  of nonsmooth convex minimization problems.
\newblock \emph{J. Oper. Res. Soc. China}, 1\penalty0 (2):\penalty0 163--186,
  2013.

\bibitem[Zhou and So(2015)]{zhou2015unified}
Z.~Zhou and A.~M.-C. So.
\newblock A unified approach to error bounds for structured convex optimization
  problems.
\newblock \emph{arXiv:1512.03518}, 2015.

\end{thebibliography}

\end{document}